\definecolor{budgetcolor}{RGB}{0,100,200}
\definecolor{roicolor}{RGB}{200,0,50}
\newtheorem{theorem}{Theorem}
\newtheorem{lemma}{Lemma}
\newtheorem{definition}{Definition}
\newtheorem*{proof}{Proof}
\title{HALO: Hindsight-Augmented Learning for Online Auto-Bidding}
\author{
    Pusen Dong,
    Chenglong Cao,
    Xinyu Zhou,
    Jirong You,
    Linhe Xu,
    Feifan Xu,
    Shuo Yuan\thanks{Corresponding author}
}
\begin{document}

\maketitle

\begin{abstract}
Digital advertising platforms operate millisecond-level auctions through Real-Time Bidding (RTB) systems, where advertisers compete for ad impressions through algorithmic bids. This dynamic mechanism enables precise audience targeting but introduces profound operational complexity due to advertiser heterogeneity: budgets and ROI targets span orders of magnitude across advertisers, from individual merchants to multinational brands.  This diversity creates a demanding adaptation landscape for Multi-Constraint Bidding (MCB). Traditional auto-bidding solutions fail in this environment due to two critical flaws: 1) severe sample inefficiency, where failed explorations under specific constraints yield no transferable knowledge for new budget-ROI combinations, and 2) limited generalization under constraint shifts, as they ignore physical relationships between constraints and bidding coefficients. To address this, we propose HALO: Hindsight-Augmented Learning for Online Auto-Bidding. HALO introduces a theoretically grounded hindsight mechanism that repurposes all explorations into training data for arbitrary constraint configuration via trajectory reorientation. Further, it employs B-spline functional representation, enabling continuous, derivative-aware bid mapping across constraint spaces. HALO ensures robust adaptation even when budget/ROI requirements differ drastically from training scenarios. Industrial dataset evaluations demonstrate the superiority of HALO in handling multi-scale constraints, reducing constraint violations while improving GMV.
\end{abstract}


\section{Introduction}
In today's digital marketing landscape, advertisers strategically deploy multi-channel campaigns to reach consumers and drive conversions cost-effectively. These efforts primarily follow two distinct approaches based on campaign objectives \cite{vakratsas1999advertising}: Brand Advertising focuses on long-term brand building through immersive formats like promotional videos, prioritizing awareness and brand loyalty; while Performance Advertising emphasizes immediate results via e-commerce product ads and social media promotions, measured through quantifiable KPIs including click-through rate (CTR), conversion rate (CVR), gross merchandise volume (GMV), and return on investment (ROI). Performance advertising dominates the digital advertising landscape, accounting for 60\% of industry expenditure compared to brand advertising's 40\% \cite{gatnercmo2023}. This spending disparity establishes performance advertising as the primary focus of our research.

The quantifiable nature of performance advertising objectives necessitates infrastructure capable of real-time decision-making at scale. To address the diverse needs of advertisers, platforms like TikTok and Google Ads \cite{tiktokad,googlead} utilize Real-Time Bidding (RTB) \cite{rtb}, the industry's standard infrastructure, which enables millisecond-level auctions for sequentially arriving ad impressions. Under this infrastructure, advertisers increasingly adopt multi-objective bidding strategies that maximize core outcomes (e.g., Gross Merchandise Volume (GMV)) while simultaneously satisfying multiple constraints, such as daily budgets and performance KPIs (e.g., ROI or Cost Per Action (CPA)). The bidding agent's main task is to dynamically compute a distinct bid for every opportunity, aligning real-time decisions with the advertiser’s constraints and objectives. This real-time constraint satisfaction requirement formally characterizes the auto-bidding problem into two strategy classes: Budget-Constrained Bidding (BCB) \cite{bcb} maximizes outcomes under only a total budget constraint. Multi-Constrained Bidding (MCB) \cite{mcb} maximizes outcomes under both budget constraint and additional KPI requirements (e.g., ROI $\ge$ X). This paper focuses on solving both BCB and MCB challenges within a unified framework.

The design of a unified framework faces fundamental challenges due to the heterogeneity of advertisers. Digital advertising platforms serve heterogeneous advertisers, from multinational enterprises to small businesses, resulting in vastly divergent budget allocations and ROI targets. This operational diversity creates complex constraint landscapes: enterprise advertisers may deploy budgets orders of magnitude larger than small businesses while pursuing fundamentally different performance thresholds. Critically, advertisers frequently adjust constraints mid-flight, implementing sudden budget surges or tightening ROI thresholds during peak campaigns. Empirical evidence reveals that about a third of campaigns make such adjustments within a single auction day, triggering severe performance volatility.

Existing work \cite{uscb,aigb,wu2018budget} predominantly operates under an implicit homogeneity assumption that constraint magnitudes exhibit limited variation across advertisers. This foundational limitation manifests in two critical deficiencies: 1. Severe exploration inefficiency: existing methods require exhaustive trajectory exploration to find a viable bidding trajectory for each specific constraint configuration, while generating numerous failed attempts. Crucially, these failed attempts yield no transferable knowledge when constraints shift, forcing complete re-exploration for each new configuration and incurring substantial exploration waste.
2. Fundamental generalization failures: due to neglecting contextual relationships between constraints and bidding coefficients, existing methods fail to capture physical correlations among constraints and bidding coefficients. This oversight creates critical modeling gaps that cause catastrophic extrapolation errors under both distribution shifts and constraint adjustments.
The compound effect of these deficiencies leads to concrete operational failures: significant extrapolation errors upon constraint shifts, resulting in unstable delivery pacing and suboptimal KPIs.

In this paper, we introduce an integrated framework that fundamentally transforms sample inefficiency and dynamic adaptation failures in constrained bidding. Drawing inspiration from human experiential learning, where even unsuccessful actions yield insights about alternative objectives, we establish a theoretically-grounded hindsight \cite{hindsight} sampling mechanism. This approach converts every exploration trajectory into productive training data, achieving near-perfect sample utilization efficiency. Crucially, we formally prove the efficacy of this mechanism and validate its optimization properties under constrained bidding conditions. Simultaneously, to resolve the extrapolation fragility, we abandon discrete point-wise mappings in favor of continuous functional mapping. Constraints-to-bid relationships are parameterized via adaptive B-spline \cite{Bspline}. This ensures robustness against constraint shift by preserving contextual correlations between multi-scale constraints and bidding. Empirical validations on the industrial dataset conclusively demonstrate the superiority of our framework. In general, our contributions can be summarized into three aspects:
\begin{itemize}
    \item We propose a theoretically-grounded hindsight sampling mechanism for the constrained bidding problem, which significantly improves sample utilization efficiency.
    \item We parameterize bid mappings via adaptive B-splines, achieving continuous adaptation to diverse constraint combinations while fundamentally enhancing extrapolation robustness against constraint shifts.
    \item Industrial validation on the real-world advertising dataset demonstrates the superiority of our solution.
\end{itemize}

\section{Problem Formulation}

In a bidding period (normally one day), consider a sequentially arriving set $\mathcal{I}$ of $N$ impression opportunities. Advertisers participate auction by submitting a bid $b_i$ for each opportunity. The impression is won if $b_i$ exceeds the highest competing bid $c_i$. The cost of this impression is also $c_i$ determined by generalized second-price auction mechanism \cite{gsp}. The advertiser's objective during the bidding period is to maximize the total value of winning impressions $\max \sum_{i \in \mathcal{I}} v_i x_i$, where $v_i \in \mathbb{R}^+$ is the private impression value (the same opportunity has different values for different advertisers), $x_i$ is a binary variable indicating whether the campaign wins impression $i$. 
Beyond that, advertisers must satisfy specific operational constraints. When subject solely to a budget constraint, the problem is termed Budget-Constrained Bidding (BCB):
\begin{align*}
&\max \quad \sum_{i \in \mathcal{I}} v_i x_i\\
& \begin{array}{r@{\quad}l@{}l@{\quad}l}
s.t.& \sum_{i \in \mathcal{I}} c_ix_i \leq B\\
\end{array} 
\end{align*}
If additional KPI constraint (such as Return on Investment (ROI)) is imposed alongside the budget, the problem is termed Multi-Constrained Bidding (MCB):
\begin{align*}
&\max \quad \sum_{i \in \mathcal{I}} v_i x_i\\
& \begin{array}{r@{\quad}l@{}l@{\quad}l}
s.t.& \sum_{i \in \mathcal{I}} c_ix_i \leq B\\
&\dfrac{\sum_{i \in \mathcal{I}} v_i x_i}{\sum_{i \in \mathcal{I}} c_ix_i} \geq r_{target} ,
\end{array} 
\end{align*}
where $r_{target}$ is the minimum allowable ROI.  

Previous work \cite{uscb} reformulated the constrained bidding problem as a linear programming problem, thus deriving the optimal bidding strategy for BCB and MCB scenarios.
\begin{align*}
&\text{BCB}:b_i = \beta_0 v_i  = \beta^{bcb} v_i\\
&\text{MCB}:b_i = (\beta_0 - \beta_1 r_{target})v_i  = \beta^{mcb}v_i
\end{align*}
where coefficient $\beta_0$ is relative to budget, and coefficient $\beta_1$ is relative to ROI. The fundamentally divergent optimal bidding formulations for BCB and MCB have prevented prior approaches from developing a unified optimization method for both paradigms. Our method decouples budget and ROI constraints to enable integrated processing of BCB and MCB within a single optimization architecture. 

Additionally, due to the unpredictability of future impressions during real-time bidding, a fixed global optimal bid coefficient does not exist; bidding agents need to dynamically adjust bidding coefficients throughout the auction period to ensure cumulative constraints are satisfied. However, a single bidding period typically involves billions of impression opportunities, and it is not feasible to adjust for every impression. To address this, the common solution \cite{wu2018budget} is to divide the bidding period into $T$ discrete decision steps. At each step, the agent derives a bidding coefficient $\beta$, which is then applied to all impression opportunities within that interval.

\section{Method}
We reformulate bidding strategies through dual-level innovations: at the data level, we propose a hindsight mechanism that enhances data efficiency; at the model level, we propose a parameterized B-spline to enable robust adaptation to multi-scale constraint configurations while reducing extrapolation errors. We now elaborate on these two innovations in detail. Due to space constraints, detailed proofs and formal definitions are provided in the appendix.

\subsection{Hindsight Mechanism}
Existing works \cite{uscb,wang2022roi} typically solve the constrained bidding problem by independently optimizing the bidding coefficient for each ($B,\ r_{target}$) pair, employing trial-and-error \cite{kaelbling1996reinforcement} to determine the optimal coefficient. However, these works suffer from a critical limitation of sample inefficiency. Specifically, the exploration data collected to optimize one budget, say $B_k$, contribute minimally to the optimization process for a different budget $B_m$. This inefficiency arises because the target constraint $B_m$ differs from the budget scale that governed the data collection history. 

To improve sample efficiency, we propose a hindsight mechanism. To demonstrate its effectiveness, we provide a series of theoretical guarantees. First, we define a naive strategy called Fixed Coefficient Strategy:

\begin{definition}[Fixed Coefficient Strategy (FCS)]
\label{define_fixed}At any decision step $\tau$, a fixed coefficient strategy commits to the coefficient $\beta$ is constant for all remaining steps $[\tau,T]$.
\end{definition}

FCS consolidates sequential decision-making across steps $[\tau,T]$ into a single optimization step, significantly reducing the decision space from $\mathcal{O}(T-\tau)$ to $\mathcal{O({\rm 1})}$. Furthermore, we define:

\begin{definition}[Omniscient Optimal Value]
The omniscient optimal value, denoted $V^{oracle}$, is the maximum achievable total value under perfect information with budget and ROI constraints.
\end{definition}

We prove that the maximum obtainable value can be achieved by FCS, exhibiting an ignorable error bound relative to the omniscient optimal value (solved by Mixed Integer Linear Programming \cite{floudas2005mixed}), as established in Lemma \ref{lemma_error_bound}.

\begin{lemma}[FCS's Error Bound]
\label{lemma_error_bound}
The total potential obtainable value $V^{fixed}$ using a fixed coefficient strategy satisfying
$$V^{fixed} > V^{oracle} - v_{max},$$
where $v_{max}$ is the maximum individual impression value.
\end{lemma}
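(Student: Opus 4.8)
The plan is to show that a fixed-coefficient strategy can essentially recover the omniscient LP/MILP optimum, losing at most one impression's worth of value. First I would recall that the omniscient optimum with budget $B$ and ROI target $r_{target}$ is the MILP $\max \sum_i v_i x_i$ subject to $\sum_i c_i x_i \le B$ and $\sum_i v_i x_i \ge r_{target}\sum_i c_i x_i$, with $x_i\in\{0,1\}$. Relaxing the integrality to $x_i\in[0,1]$ gives an LP whose optimum $V^{LP}$ satisfies $V^{LP}\ge V^{oracle}$. The key structural fact is that this LP has only two non-trivial constraints, so an extreme-point (basic feasible) optimal solution has at most two fractional coordinates; rounding those fractional $x_i$ down to $0$ removes at most $2v_{max}$ of value — already this gives a weaker bound, and with a slightly more careful argument (only one of the two constraints can be tight-and-binding on a fractional variable in the relevant cases, or absorbing one fractional unit into the ``next'' impression) one sharpens the loss to a single $v_{max}$.

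Next I would connect the rounded LP solution to a fixed coefficient. By LP duality, let $\beta_0\ge 0$ and $\beta_1\ge 0$ be the optimal dual multipliers for the budget and ROI constraints respectively. Complementary slackness shows that the LP selects (fractionally only at the boundary) exactly those impressions with $v_i > (\beta_0 - \beta_1 r_{target}) c_i$ when we write the bid as $b_i=(\beta_0-\beta_1 r_{target})v_i$ and compare against cost $c_i$ — i.e., the LP's integral part is precisely the allocation induced by bidding with the single constant coefficient $\beta^{mcb}=\beta_0-\beta_1 r_{target}$ (and $\beta^{bcb}=\beta_0$ in the budget-only case). Hence committing to this $\beta$ for all steps $[\tau,T]$ wins exactly the integral support of the LP optimum, whose value is at least $V^{LP}-v_{max}\ge V^{oracle}-v_{max}$, and since $V^{fixed}$ is the best achievable over fixed coefficients it can only do better, giving the strict inequality $V^{fixed}>V^{oracle}-v_{max}$.

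I expect the main obstacle to be the ROI (second) constraint: unlike the pure budget case, the ROI constraint is a ratio/linear-fractional condition, so I must be careful that (a) the LP relaxation is the right convexification — it is, since $\sum v_i x_i \ge r_{target}\sum c_i x_i$ is linear in $x$ — and (b) that the dual-based bid $b_i=(\beta_0-\beta_1 r_{target})v_i$ really reproduces the LP's greedy selection even when $\beta_0-\beta_1 r_{target}$ could in principle be negative or zero; I would handle the degenerate sign/tie cases separately, noting they only make the fixed strategy win (weakly) more impressions, so the bound is preserved. A secondary technical point is the extreme-point fractionality count: I would invoke the standard fact that a basic feasible solution of an LP in $N$ variables with $k$ inequality constraints beyond the box $[0,1]^N$ has at most $k$ coordinates strictly inside $(0,1)$, then argue that in our setting the effective loss from discarding fractional mass is bounded by a single $v_{max}$ rather than $k\cdot v_{max}$ by a merging/absorption argument on the at-most-two boundary impressions.
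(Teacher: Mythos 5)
Your overall skeleton (LP relaxation, identify the integral part of the relaxed optimum with a fixed-coefficient allocation, bound the rounding loss by one impression's value) matches the paper's, but the two technical steps you leave as sketches are exactly where the argument can fail. First, with both the budget and the ROI constraint active, a basic optimal solution of the relaxation can have two fractional coordinates, so the naive rounding bound is $2v_{\max}$, not $v_{\max}$; your proposed ``merging/absorption'' sharpening is only asserted, not carried out. The paper avoids this by not working with an arbitrary extreme point: it characterizes the fractional optimum explicitly as a greedy prefix in decreasing efficiency order $e_i = v_i/c_i$ plus a single fractional item contributing $r\cdot v_{k+1}$ with $r<1$, so the loss is $r\,v_{k+1} < v_{\max}$ by construction (and the strict inequality in the lemma comes precisely from $r<1$, which your chain of $\ge$'s does not deliver).

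Second, and more seriously, rounding the fractional coordinates down to $0$ need not preserve feasibility of the ROI constraint $\sum_i (v_i - r_{target}c_i)x_i \ge 0$: if the fractional items have $v_i - r_{target}c_i > 0$ and the constraint is tight at the LP optimum, then the integral support alone satisfies $\sum (v_i - r_{target}c_i) < 0$, i.e., the allocation you claim the fixed coefficient ``wins exactly'' violates the ROI target and cannot be counted toward $V^{fixed}$. The paper's route sidesteps this because cumulative ROI is non-increasing along prefixes of the efficiency-sorted order (the content of its Lemma 3), so every shorter prefix is ROI-feasible whenever a longer one is; your extreme-point/duality route has no such guarantee and needs an extra argument here. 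A minor additional slip: with bid $b_i = (\beta_0 - \beta_1 r_{target})v_i$ compared against cost $c_i$, the winning condition is $(\beta_0 - \beta_1 r_{target})v_i \ge c_i$, a threshold on the efficiency $v_i/c_i$, not $v_i > (\beta_0 - \beta_1 r_{target})c_i$ as written.
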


Empirical evidence demonstrates that the value gap between the omniscient optimal value $V^{oracle}$ and the maximum individual impression value $v_{max}$ spans at least 2 orders of magnitude in practical advertising systems. Beyond that, under typical campaign conditions, the actual error remains much smaller than $v_{max}$, rendering the theoretical upper bound practically negligible. Consequently, we establish that FCS can achieve near-optimal performance, effectively approximating the omniscient optimum value while reducing optimization difficulty. 

\begin{lemma}[Optimality Condition for FCS Under BCB]
\label{lemma_optimal_condition}
 Under budget-constrained bidding with fixed coefficient Strategy, at any decision step $\tau$ with remaining budget $B_{\tau}$, if $\exists \beta^*$ such that
$$
\sum_{t=\tau}^T C_t(\beta^*) = B_{\tau},
$$
then $\beta^*$ is the optimal coefficient on $[\tau,T]$.
\end{lemma}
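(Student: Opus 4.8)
The plan is to recast the choice of a single fixed coefficient on $[\tau,T]$ as a one–dimensional monotone optimization. For a constant coefficient $\beta$, let $W(\beta)=\{i : \beta v_i > c_i\}$ be the set of remaining opportunities won, and write $C(\beta)=\sum_{t=\tau}^{T} C_t(\beta)=\sum_{i\in W(\beta)} c_i$ and $V(\beta)=\sum_{i\in W(\beta)} v_i$ for the total cost and total value accrued over $[\tau,T]$. Since $\beta v_i > c_i \iff \beta > c_i/v_i$, the family $\{W(\beta)\}$ is nested and nondecreasing in $\beta$, so both $C(\cdot)$ and $V(\cdot)$ are nondecreasing step functions of $\beta$. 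Because the value collected before step $\tau$ is a fixed constant, "optimal coefficient on $[\tau,T]$" means a maximizer of $V(\beta)$ over the feasible set $\{\beta : C(\beta)\le B_\tau\}$, and $\beta^*$ is feasible since $C(\beta^*)=B_\tau$.

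Next I would fix an arbitrary feasible competitor $\beta$ and prove $V(\beta)\le V(\beta^*)$ by a dichotomy on the sign of $\beta-\beta^*$. If $\beta\le\beta^*$, monotonicity of $V$ gives $V(\beta)\le V(\beta^*)$ at once. If $\beta>\beta^*$, monotonicity of $C$ gives $C(\beta)\ge C(\beta^*)=B_\tau$, which together with feasibility $C(\beta)\le B_\tau$ forces $C(\beta)=B_\tau=C(\beta^*)$. Since the won sets are nested, $W(\beta^*)\subseteq W(\beta)$ and $\sum_{i\in W(\beta)\setminus W(\beta^*)} c_i = C(\beta)-C(\beta^*)=0$; as every opportunity carries a strictly positive cost (reserve price), this incremental set is empty, hence $W(\beta)=W(\beta^*)$ and $V(\beta)=V(\beta^*)$. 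In both branches $V(\beta)\le V(\beta^*)$, so $\beta^*$ attains the maximum and is an optimal fixed coefficient on $[\tau,T]$.

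I expect the only genuine obstacle to be the branch $\beta>\beta^*$: a priori a more aggressive coefficient could capture strictly more value, so the argument must exploit the exact budget-exhaustion equality $C(\beta^*)=B_\tau$ — not merely feasibility — to rule this out. The delicate sub-point there is the treatment of zero-cost or tied opportunities; I would dispatch it via the positive-reserve assumption used throughout the paper (zero incremental cost then forces an empty incremental won set), or, absent that assumption, by adopting the tie-breaking convention that such opportunities are already credited to $W(\beta^*)$, in which case $\beta^*$ remains an optimal coefficient though possibly not the unique one. Everything else is routine monotonicity bookkeeping and needs no further estimates.
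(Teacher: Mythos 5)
Your proof is correct, but it takes a genuinely different and more elementary route than the paper's. The paper sorts opportunities by efficiency $e_i=v_i/c_i$, identifies $\beta^*$ with the greedy threshold $1/e^*$, and then argues the two deviation cases by efficiency/opportunity-cost comparisons, including an analysis of what happens when an aggressive coefficient overruns the budget and later auctions are disqualified mid-flight. You instead observe that the won set $W(\beta)=\{i:\beta v_i>c_i\}$ is nested and nondecreasing in $\beta$, so $C(\cdot)$ and $V(\cdot)$ are nondecreasing step functions, and the whole lemma collapses to two lines of monotonicity plus the budget-exhaustion equality $C(\beta^*)=B_\tau$ forcing the incremental won set to be empty in the aggressive branch. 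This is cleaner and actually tighter than the paper's Case~1, where the bound $V_{\text{actual}}\le\sum_{i\in S^*}v_i$ under truncation is asserted with little justification. What your route gives up is twofold: (i) you sidestep the overspend semantics by declaring $C(\beta)>B_\tau$ infeasible, whereas the paper (informally) tries to cover the realized-value-under-truncation interpretation, which would require the efficiency argument you avoided (the truncated won set is an arrival-order prefix, and showing it cannot beat $W(\beta^*)$ does need that items beyond the threshold have efficiency below $e^*$); and (ii) you do not exhibit the explicit threshold structure $\beta^*=c_k/v_k$, which the paper reuses in the error-bound argument of Lemma~1. Your handling of ties and zero-cost items via a positive-reserve assumption or a tie-breaking convention is the right caveat and matches the level of rigor elsewhere in the paper.
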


\newcommand{\pluseq}{\mathrel{+}=}
\begin{algorithm}
\caption{Hindsight Experience Collection}
\label{algo_collection}
\begin{algorithmic}[1]
\State \textbf{Input:} total timesteps $T$, advertiser feature, number of sample coefficients $L$, uniform distribution $\mathcal{U}$
\State \textbf{Output:} Dataset $\mathcal{D} = \{ (\mathbf{s}_i, \hat\beta_i, \text{cost}_i, \text{value}_i) \}$
\For{$l = 1$ to $L$}
\For{$\tau = 1$ to $T$}
    \State $\hat\beta_\tau \sim \mathcal{U}$ 

    \State $\mathbf{s}_\tau \leftarrow \text{compute\_features}$ 
    
    \State {$\hat{C}_\tau^T = 0$} \quad
    \State {$\hat{V}_\tau^T = 0$}
    
    \For{$t = \tau$ to $T$}
        \State $\text{Bid with } \hat\beta_\tau \text{ at time-step } t \text{ and get } {C_{t}} \text{, } {V_{t}};$
        \State {$\hat{C}_\tau^T \gets \hat{C}_\tau^T + C_{t}$}
        \State {$\hat{V}_\tau^T \gets \hat{V}_\tau^T + V_{t}$}
    \EndFor
    
    \State $\mathcal{D}.\text{append}\left( \mathbf{s}_\tau, \hat\beta_\tau, {\hat{C}_\tau^T }, {\hat{V}_\tau^T} \right)$
\EndFor
\EndFor
\State \Return $\mathcal{D}$
\end{algorithmic}
\end{algorithm}

We establish that under BCB, the FCS achieves optimality if and only if the budget is fully exhausted over the interval  $[\tau, T]$. Lemma \ref{lemma_optimal_condition} reveals a fundamental insight: multi-scale constrained bidding can be reformulated as a multi-objective trajectory alignment problem. Specifically, for any exploratory trajectory generated under a fixed coefficient $\beta$ (even when violating the target budget $B$), replacing $B$ with the realized cost $C_{realized}(\beta)$ transforms the trajectory into an optimal exploration for the posterior constraint configuration $B' = C_{realized}(\beta)$. Through this mechanism, every exploratory trial becomes an effective sample for some budget-constrained scenario. We formalize this paradigm as the Hindsight Mechanism, which establishes a bijection between historical explorations and valid constraint configurations. Through this mechanism, we can collect the hindsight experience through Algorithm \ref{algo_collection}.

\begin{definition}[Hindsight Experience]
\label{define_hindsight}
For any bidding trial over $[\tau,T]$ with fixed coefficient, we record
$$
\mathcal{H_{\tau}} = (\text{s}_\tau,\hat\beta_\tau,\hat{C}_{\tau}^T(\beta_\tau), \hat{V}_{\tau}^T(\beta_\tau)),
$$
as one hindsight experience tuple, where $\text{s}_\tau$ is the state feature ,$\hat{C}_{\tau}^T(\hat\beta_\tau)$ is the realized cost over $[\tau, T]$ and $\hat{V}_{\tau}^T(\hat\beta_\tau))$ is the realized value.
\end{definition}

After elucidating how budget constraint learning benefits from the Hindsight Mechanism, we now systematically extend this framework to ROI constraint satisfaction. When introducing ROI target, the bid coefficient $\beta^*$ given by Lemma \ref{lemma_optimal_condition} is likely to cause the realized ROI to violate ROI constraint. Define the expected ROI:
$$
R(\beta) = \mathbb{E}[\dfrac{\sum_{i \in \mathcal{I}} v_i x_i(\beta)}{\sum_{i \in \mathcal{I}} c_ix_i(\beta)}]
$$

\begin{lemma}[ROI Monotonicity]
\label{lem:roi_monotonicity}
The expected ROI function $R(\beta)$ is non-increasing in $\beta$.
\end{lemma}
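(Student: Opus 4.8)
The plan is to establish the claim \emph{pathwise} and then average over the randomness. Fix a realization of the impression set $\mathcal{I}$ and of the competing bids $(c_i)_{i\in\mathcal{I}}$. Since $b_i=\beta v_i$ and impression $i$ is won exactly when $b_i>c_i$, the set of won impressions under coefficient $\beta$ is
\[
W(\beta)=\{\, i\in\mathcal{I}: \rho_i<\beta \,\},\qquad \rho_i:=\frac{c_i}{v_i},
\]
which is well defined because $v_i\in\mathbb{R}^+$, and the family $\{W(\beta)\}_\beta$ is nested and increasing in $\beta$. Writing the realized ROI as $\widehat R(\beta)=\frac{\sum_{i\in W(\beta)}v_i}{\sum_{i\in W(\beta)}c_i}$, it suffices to show $\widehat R$ is non-increasing in $\beta$; then $R(\beta)=\mathbb{E}[\widehat R(\beta)]$ inherits monotonicity from monotonicity of the expectation. (With the weak winning convention $b_i\ge c_i$ the argument is identical.)

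For $\beta_1<\beta_2$ I would decompose $W(\beta_2)=W(\beta_1)\cup S$ with $S:=\{\, i:\beta_1\le\rho_i<\beta_2 \,\}$ and $W(\beta_1)\cap S=\emptyset$, and set $A_1=\sum_{i\in W(\beta_1)}v_i$, $C_1=\sum_{i\in W(\beta_1)}c_i$, $\Delta_A=\sum_{i\in S}v_i$, $\Delta_C=\sum_{i\in S}c_i$. Every $i\in W(\beta_1)$ satisfies $c_i<\beta_1 v_i$, so summing gives $C_1\le\beta_1 A_1$ and thus $A_1/C_1\ge 1/\beta_1$; every $i\in S$ satisfies $c_i\ge\beta_1 v_i$, so $\Delta_C\ge\beta_1\Delta_A$ and thus $\Delta_A/\Delta_C\le 1/\beta_1$. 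Hence $\Delta_A/\Delta_C\le A_1/C_1$, and the elementary mediant inequality (if $a/b\ge c/d$ with positive denominators, then $a/b\ge (a+c)/(b+d)\ge c/d$) yields
\[
\widehat R(\beta_2)=\frac{A_1+\Delta_A}{C_1+\Delta_C}\le\frac{A_1}{C_1}=\widehat R(\beta_1).
\]
Averaging over realizations gives $R(\beta_1)\ge R(\beta_2)$, which is the claimed monotonicity.

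The conceptual content is just this two-line comparison; the main obstacle is the bookkeeping of degenerate cases in which $\widehat R$ is a ratio with an empty or zero denominator. Concretely: if $S=\emptyset$ then $\widehat R(\beta_2)=\widehat R(\beta_1)$ trivially; a zero-cost impression has $\rho_i=0$, hence lies in $W(\beta)$ for every $\beta>0$ and can never enter $S$, so $S\ne\emptyset$ forces $\Delta_C>0$ and makes the mediant step legitimate; and the remaining cases $C_1=0$ (only cost-free wins at $\beta_1$, so $\widehat R(\beta_1)=+\infty$) or $W(\beta_1)=\emptyset$ are dispatched either by the convention $\widehat R:=+\infty$ (so the inequality is vacuous) or by restricting attention to the region where the denominator is positive. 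As an alternative route I could work in a density model for $(v,c)$, write $A(\beta)$ and $C(\beta)$ as integrals over $\{\rho<\beta\}$, and differentiate: on the level set $\{\rho=\beta\}$ one has $c=\beta v$, so $C'(\beta)=\beta A'(\beta)$, whence $(A/C)'(\beta)=A'(\beta)\,(C(\beta)-\beta A(\beta))/C(\beta)^2\le 0$ because $C(\beta)-\beta A(\beta)=\int_{\rho<\beta}(c-\beta v)\le 0$ and $A'\ge 0$. I would keep the combinatorial argument as primary, since here $x_i$ is explicitly a $\{0,1\}$ indicator, and present the differential version only as a remark.
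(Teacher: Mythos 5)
Your proposal is correct and follows essentially the same route as the paper's proof: both establish pathwise monotonicity of the realized ROI by observing that the winning set grows with $\beta$, that the newly admitted impressions have efficiency no better than the incumbents, and that the mediant inequality therefore drags the ratio down, after which monotonicity of expectation finishes the argument. The only differences are organizational: you compare arbitrary $\beta_1<\beta_2$ in a single block step using the threshold $1/\beta_1$ as intermediary, whereas the paper inducts one impression at a time along the sorted efficiency order, and you are more careful than the paper about empty or zero-cost denominators.
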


Lemma \ref{lem:roi_monotonicity} indicates that the expected ROI typically increases as the bidding coefficient $\beta$ decreases. Therefore, when the $\beta^*$ derived from Lemma \ref{lemma_optimal_condition} results in the realized ROI violating the ROI constraint, we need to search for an adjustment factor $\alpha < 1$ to shade $\beta^*$. This scaling reduces bid intensity, filtering out low-value traffic while ensuring the adjusted ROI falls within the target threshold. These two core mechanisms transform constrained optimization into a dual-control system: Lemma \ref{lemma_optimal_condition} governs budget exhaustion, while $\alpha$ enforces ROI compliance through strategic bid suppression. Then we get the following lemma:

\begin{lemma}[MCB Coefficient Adjustment]
\label{lemma:roi_adjustment}
Under budget and ROI constraint with fixed coefficient Strategy, at any decision step $\tau$ with history cost $C_1^{\tau-1}$ and history value $V_1^{\tau-1}$, if $\exists \alpha\le1$ such that $\beta^{opt} = \alpha\beta^*$ applied on $[\tau,T]$ can make the total realized ROI equal to ROI constraint, then $\alpha$ satisfys
$$\beta^{*} \int_{\alpha}^{1} \left[ \gamma\left( \beta^{*} u \right) - r_{\mathrm{target}}  \eta\left( \beta^{*} u \right) \right]  du = \Delta_{\mathrm{ROI}}$$ 
where $\gamma = \frac{\partial V}{\partial \beta}$ is derivative of value with respect to coefficient and $\eta = \frac{\partial C}{\partial \beta}$ is derivative of cost with respect to coefficient, $\Delta_{\text{ROI}} = V_1^{\tau-1} + V_\tau^T(\beta^*) - r_{target}(C_1^{\tau-1} + C_\tau^T(\beta^*))$ with $\beta^*$.
\end{lemma}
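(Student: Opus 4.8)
The plan is to convert the ROI-equality hypothesis into a purely additive identity and then rewrite the future value and cost at the shaded coefficient $\beta^{opt}=\alpha\beta^{*}$ via the fundamental theorem of calculus. First I would spell out what the hypothesis means: applying $\beta^{opt}$ on $[\tau,T]$ produces total realized value $V_1^{\tau-1}+V_\tau^T(\alpha\beta^{*})$ and total realized cost $C_1^{\tau-1}+C_\tau^T(\alpha\beta^{*})$ — the history terms $C_1^{\tau-1},V_1^{\tau-1}$ are already realized and do not depend on the future coefficient — so ``total realized ROI equals the target'' is
$$\frac{V_1^{\tau-1}+V_\tau^T(\alpha\beta^{*})}{C_1^{\tau-1}+C_\tau^T(\alpha\beta^{*})}=r_{target}.$$
Clearing the positive denominator gives the equivalent additive form
$$V_1^{\tau-1}+V_\tau^T(\alpha\beta^{*})-r_{target}\bigl(C_1^{\tau-1}+C_\tau^T(\alpha\beta^{*})\bigr)=0.$$

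Next I would invoke the fundamental theorem of calculus on $\beta\mapsto V_\tau^T(\beta)$ and $\beta\mapsto C_\tau^T(\beta)$ (understood in the smoothed/expected sense, see below) to write, using $\gamma=\partial V/\partial\beta$, $\eta=\partial C/\partial\beta$ and $\alpha\beta^{*}\le\beta^{*}$,
$$V_\tau^T(\alpha\beta^{*})=V_\tau^T(\beta^{*})-\int_{\alpha\beta^{*}}^{\beta^{*}}\gamma(\beta)\,d\beta,\qquad C_\tau^T(\alpha\beta^{*})=C_\tau^T(\beta^{*})-\int_{\alpha\beta^{*}}^{\beta^{*}}\eta(\beta)\,d\beta.$$
The substitution $\beta=\beta^{*}u$ (hence $d\beta=\beta^{*}\,du$, and $\beta\in[\alpha\beta^{*},\beta^{*}]\Leftrightarrow u\in[\alpha,1]$) turns these integrals into $\beta^{*}\int_{\alpha}^{1}\gamma(\beta^{*}u)\,du$ and $\beta^{*}\int_{\alpha}^{1}\eta(\beta^{*}u)\,du$ respectively.

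Finally I would substitute these two expressions into the additive identity from the first step, group the $\alpha$-independent terms into $\Delta_{\mathrm{ROI}}=V_1^{\tau-1}+V_\tau^T(\beta^{*})-r_{target}\bigl(C_1^{\tau-1}+C_\tau^T(\beta^{*})\bigr)$, and merge the two remaining integrals, which immediately yields
$$\beta^{*}\int_{\alpha}^{1}\bigl[\gamma(\beta^{*}u)-r_{target}\,\eta(\beta^{*}u)\bigr]\,du=\Delta_{\mathrm{ROI}},$$
as claimed. The one genuinely delicate point — the step I would flag as the main obstacle — is differentiability: the raw per-step value and cost are sums of indicators $\mathbf 1\{\beta v_i\ge c_i\}$ and are only piecewise constant in $\beta$, so $\gamma,\eta$ and hence the FTC step only make literal sense after passing to expectations (as in the definition of $R(\beta)$) or to the smooth B-spline functional representation used later in the paper; under that smoothing the argument goes through verbatim. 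Note that the monotonicity of Lemma~\ref{lem:roi_monotonicity} is not needed for the identity itself — it only justifies that the $\alpha\le1$ posited in the hypothesis exists (and is essentially unique), i.e. that shading $\beta^{*}$ downward is the correct direction to push the realized ROI up to the target.
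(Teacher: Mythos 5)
Your proposal is correct and follows essentially the same route as the paper's proof: the paper just packages the two fundamental-theorem-of-calculus steps into a single net-value function $f(\beta)=V(\beta)-r_{\mathrm{target}}C(\beta)$ before applying FTC and the same substitution $\beta=\beta^{*}u$, which is algebraically identical to your term-by-term treatment. Your added caveat about $\gamma,\eta$ only existing after smoothing (expectation or B-spline) is a point the paper's proof silently assumes, and your observation that Lemma~\ref{lem:roi_monotonicity} is only needed for existence of $\alpha$, not for the identity, is also accurate.
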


Lemma \ref{lemma:roi_adjustment} establishes the functional form of the adjustment factor  $\alpha$ as dependent on seven key determinants: historical aggregate cost $C_1^{\tau-1}$, historical cumulative value $V_1^{\tau-1}$, future total cost with $\beta^*$ ($C_\tau^T(\beta^*)$), future total value with $\beta^*$ ($V_\tau^T(\beta^*)$), the derivative of  value ($\gamma$), the derivative of cost ($\eta$), and the target ROI ($r_{target}$). This relationship is formally expressed as:
$$
\alpha = f(C_1^{\tau-1},V_1^{\tau-1}, C_\tau^T(\beta^*),V_\tau^T(\beta^*),\gamma,\eta,r_{target}).
$$

The formulation exposes a fundamental limitation in traditional bidding methods: their failure to explicitly model the value derivative $\gamma$ and cost derivative $\eta$ critically undermines bidding performance. This omission prevents accurate sensitivity quantification of cost-value responses to coefficient variations, directly causing ROI constraint violations. Consequently, we fundamentally depart from conventional point-wise mapping approaches that naively fit constraint-to-coefficient mappings without physical interpretability. Instead, we establish continuous functional mappings that intrinsically encode derivative relationships. 

Specifically, we model the bid coefficient function $\beta_{\tau} = f_{\theta}(B_\tau)$ and future value function $V_{\tau}^T = f_\phi(B_\tau)$, where $f_\theta,f_\phi : \mathbb{R^+} \rightarrow \mathbb{R^+}$ are parameterized continuous functions learned from hindsight experiences $\mathcal{D} = \{\mathcal{H_{\tau}}\}$. It is worth noting that we transition from coefficient-centric (in Lemma \ref{lemma:roi_adjustment}) to budget-centric parameterization, because budget serves as the directly observable state variable that intrinsically defines the operational constraint boundary.
This transformation preserves derivative relationships through the chain rule \cite{swokowski1979calculus} $\frac{\partial V}{\partial \beta} = \frac{\partial f_{\phi}}{\partial B} \cdot \frac{\partial B}{\partial \beta} = \frac{\partial f_{\phi}}{\partial B} \cdot \left( \frac{\partial f_{\theta}}{\partial B} \right)^{-1}$. This functional mapping intrinsically encodes gradient dynamics while ensuring operational feasibility, enabling derivative-informed computation of the adjustment factor $\alpha$ for precise constraint satisfaction. This fundamental idea delivers a breakthrough advance and makes our method outperform state-of-the-art alternatives

Combining the above lemma and definition, we can get the theorem below.
\begin{theorem}[Optimal Bidding Strategy]
\label{thm:elasticity_cost_adjustment}
At any step $\tau$, with remaining budget $B_\tau$, target ROI $r_{target}$,historical cost $C_1^{\tau-1}$,historical value $V_1^{\tau-1}$, solve for $\beta_\tau = f_\theta(B_\tau)$, $V_{\tau}^T= f_\phi(B_\tau)$ and calculate $\Delta_{\text{ROI}} = V_1^{\tau-1} + V_{\tau}^T - r_{target}(C_1^{\tau-1} + B_\tau)$
\begin{itemize}
    \item if $\Delta_{\text{ROI}} \ge 0$,  $\beta_\tau$ is the optimal bidding coefficient.
    \item if $\Delta_{\text{ROI}} < 0$, $\alpha \beta_\tau$ is the optimal bidding coefficient.
\end{itemize}
\end{theorem}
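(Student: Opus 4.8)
The plan is to obtain the theorem as a synthesis of the four preceding lemmas, working in the idealized functional regime where the learned maps $f_\theta, f_\phi$ coincide with the true FCS cost-to-coefficient and cost-to-value relationships, so that the statement reduces to a claim about these functions. First I would record that, by the way the hindsight dataset $\mathcal{D}$ is built in Algorithm~\ref{algo_collection}, $f_\theta$ inverts the realized-cost relationship: $\beta_\tau = f_\theta(B_\tau)$ is exactly the fixed coefficient whose realized cost over $[\tau,T]$ equals the remaining budget $B_\tau$. Lemma~\ref{lemma_optimal_condition} then certifies that $\beta_\tau$ is the optimal FCS coefficient for the budget-only subproblem on $[\tau,T]$, and Lemma~\ref{lemma_error_bound} guarantees that the value $V_\tau^T = f_\phi(B_\tau)$ it attains is within $v_{max}$ of the omniscient optimum; throughout, optimality is understood up to this negligible slack.

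For the branch $\Delta_{\text{ROI}} \ge 0$ I would rewrite the cumulative ROI constraint $\frac{V_1^{\tau-1} + V_\tau^T}{C_1^{\tau-1} + B_\tau} \ge r_{target}$ in the equivalent affine form $V_1^{\tau-1} + V_\tau^T - r_{target}(C_1^{\tau-1} + B_\tau) \ge 0$, which is precisely $\Delta_{\text{ROI}} \ge 0$. Hence in this case the budget-optimal coefficient $\beta_\tau$ is already feasible for the added ROI constraint. Since the MCB feasible set is contained in the BCB feasible set, the MCB optimum is bounded above by the BCB optimum; but $\beta_\tau$ is MCB-feasible and attains the BCB optimum, so it is MCB-optimal, which closes this branch.

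For the branch $\Delta_{\text{ROI}} < 0$ the ROI constraint is violated at $\beta_\tau$. I would first note that every budget-feasible coefficient satisfies $\beta \le \beta_\tau$, because the realized cost over $[\tau,T]$ is non-decreasing in $\beta$ and equals $B_\tau$ at $\beta_\tau$. On the admissible interval $(0,\beta_\tau]$, Lemma~\ref{lem:roi_monotonicity} gives that realized ROI is non-increasing in $\beta$ while total value is non-decreasing in $\beta$; therefore the value-maximizing feasible choice is the largest $\beta \le \beta_\tau$ for which the ROI constraint still holds. By continuity of the cost and value functions (intermediate value theorem) this is the unique coefficient at which the realized ROI equals $r_{target}$ exactly; writing it as $\alpha\beta_\tau$ with $\alpha \le 1$ and invoking Lemma~\ref{lemma:roi_adjustment} identifies $\alpha$ as the solution of the stated integral equation, with $\gamma$ and $\eta$ supplied through the chain-rule identity $\frac{\partial V}{\partial \beta} = \frac{\partial f_\phi}{\partial B}\left(\frac{\partial f_\theta}{\partial B}\right)^{-1}$.

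The hard part will be the optimality argument in the second branch. It rests on three ingredients: (i) that restricting attention to fixed-coefficient strategies costs essentially nothing, which is exactly Lemma~\ref{lemma_error_bound}; (ii) genuine monotonicity and continuity of $V(\beta)$ and $C(\beta)$ in $\beta$ — intuitively clear from the win-rate structure of the auction but requiring the LP-relaxation viewpoint of the earlier reformulation to be made rigorous; and (iii) verifying that the budget constraint is slack at $\alpha\beta_\tau$, so that ROI is the sole active constraint there. I would also check existence of $\alpha \in (0,1]$ directly: viewing $\Delta_{\text{ROI}}$ as a function of $\alpha$, it is continuous, strictly negative at $\alpha = 1$ by hypothesis, and non-negative as $\alpha \to 0^+$ (where the future bid contribution vanishes and the realized ROI is dominated by the already-accumulated history), so the intermediate value theorem yields a crossing, which is the desired $\alpha$.
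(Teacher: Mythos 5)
Your proposal is correct and follows essentially the same route as the paper's proof: Case 1 is closed by observing that the budget-optimal coefficient from Lemma \ref{lemma_optimal_condition} is already ROI-feasible, and Case 2 combines ROI monotonicity (Lemma \ref{lem:roi_monotonicity}) with monotonicity of value and cost in $\beta$ to place the optimum on the ROI constraint boundary with the budget slack, then identifies $\alpha$ via the integral equation of Lemma \ref{lemma:roi_adjustment}. The only overreach is your auxiliary existence argument for $\alpha$: as $\alpha \to 0^+$ the realized ROI tends to the historical ratio $V_1^{\tau-1}/C_1^{\tau-1}$, which need not meet $r_{target}$, so the intermediate-value crossing is not guaranteed in general --- but since both Lemma \ref{lemma:roi_adjustment} and the theorem are stated conditionally on such an $\alpha$ existing (and the paper likewise asserts existence without further justification), this does not affect the validity of the main argument.
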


\subsection{B-spine Policy}

As we discuss above, the continuous functions of value and cost are important to compute the adjustment factor $\alpha$ in Lemma \ref{lemma:roi_adjustment}. However, the properties of continuous value/cost functions exhibit significant variation across bidding environments. To ensure modeling tractability while preserving fidelity, functional assumptions become necessary because the assumptions critically govern both approximation accuracy and the precision of $\alpha$. To determine optimal functional representations, we categorize derivatives into three types with corresponding functional assumptions:

\begin{figure}
    \centering
    \subfloat[$\beta^*$ vs $B_\tau$]
    {
    \includegraphics[width=0.4\linewidth]{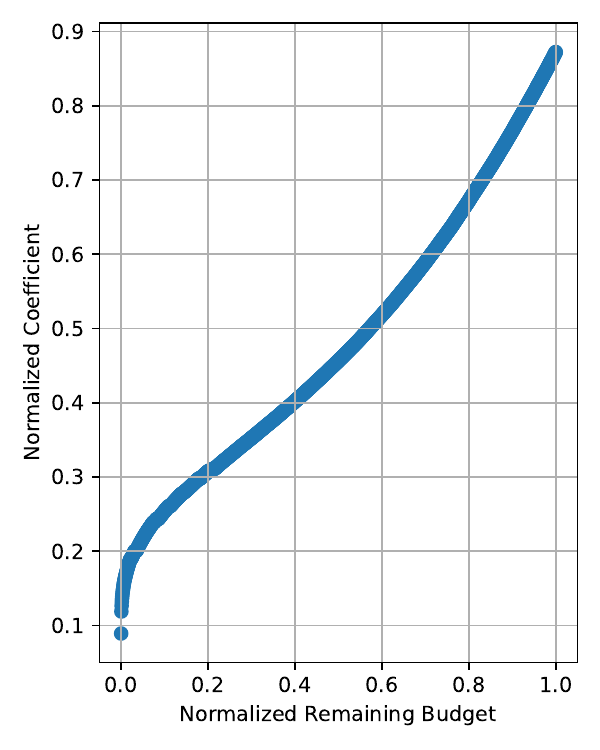}
    }
    \subfloat[Four paradigms]
    {
    \includegraphics[width=0.41\linewidth]{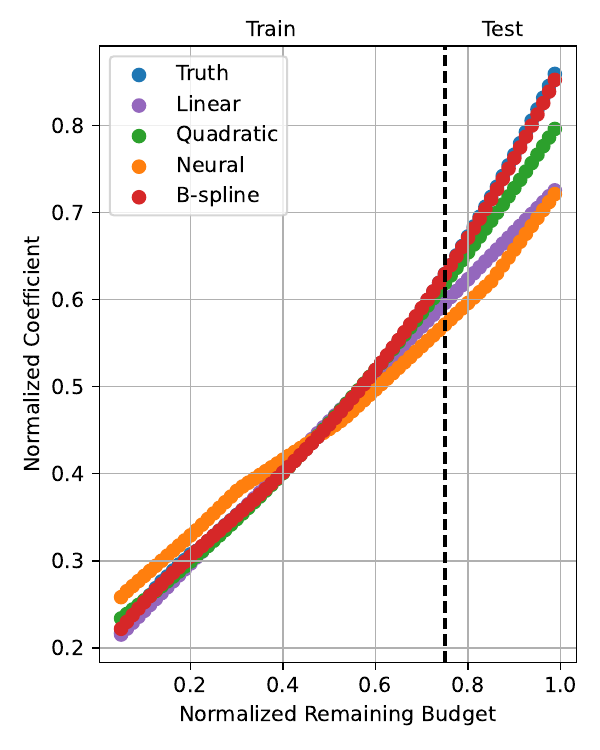}
    }
    \caption{(a) For a specific advertiser at decision step $\tau$, the relationship between remaining budget $B_{\tau}$ and $\beta^*$. (b) Partition the budget domain into training and testing to evaluate the generalization capabilities of four modeling paradigms under extrapolation conditions.}
    \label{fig:flow shape}
    \vspace{-1.5em}
\end{figure}

\vspace{-0.2em}
\begin{enumerate}
\setlength\itemsep{-0.3em}
\item Constant Derivative ($\frac{\partial \beta}{\partial B} =\text{constant}$): model the relationship function via linear polynomial $f_{\theta}(B_{\tau}) = \theta_0 + \theta_1 B_{\tau}$.

\item Linear Derivative ($\frac{\partial \beta}{\partial B} = a + b\cdot B$): model the relationship function via quadratic polynomial $f_{\theta}(B_{\tau}) = \theta_0 + \theta_1 B_{\tau} + \theta_2 B_{\tau}^2$.

\item Nonlinear Derivative ($\frac{\partial \beta}{\partial B} = g(B)$): model the relationship function via neural network, $f_{\theta}(B_{\tau}) = \text{MLP}(B_{\tau}; \theta)$
\end{enumerate}

This taxonomy directly dictates computational methodology: constant derivatives permit closed-form $\alpha$ through algebraic manipulation, while linear and nonlinear derivatives need to compute $\alpha$ via iterative numerical techniques \cite{ypma1995historical}. The choice of assumptions inherently balances expressivity against computational complexity. Linear and quadratic polynomial functions impose a strong parametric prior that fails to capture the inherent heterogeneity in real-world online bidding environments. As empirically demonstrated in Figure \ref{fig:flow shape} (a), the relationship between remaining budget and optimal coefficient exhibits non-stationary complexity that cannot be adequately modeled by rigid functional forms. While neural networks offer greater flexibility, their decision boundaries exhibit fragility and high sensitivity to input noise \cite{ghorbani2019interpretation}.

We therefore propose modeling the mapping relationship using B-splines \cite{liu2024kan}, which provide adaptive local control through piecewise polynomial segments connected at knots. Crucially, this formulation ensures local modification independence, unlike neural networks. This locality property enables precise approximation of complex mappings while maintaining structural stability. The comparative analysis of four modeling paradigms (linear polynomial, quadratic polynomial, neural network, and B-spline) is visually summarized in Figure \ref{fig:flow shape} (b), providing empirical validation of B-spline's superiority.

\begin{figure}
    \centering
    \includegraphics[width=1.0\linewidth]{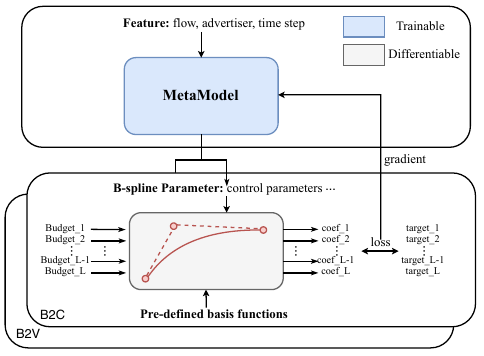}
    \caption{The training process of the Hindsight-Augmented B-Spline Policy, the blue Meta-Model is trainable, while the gray B-splines curve determined by the output of the Meta-Model. This B-spline module supports differentiable gradient backpropagation but remains non-trainable.}
    \label{fig:meta_model}
    \vspace{-1.0em}
    
\end{figure}

Based upon the analysis, we propose a Dual B-Spline Policy for the constrained bidding problem, where a shared MetaModel processes multi-dimensional features (traffic patterns, advertiser features, left time) to generate parameters ($\theta,\phi = \text{MetaModel}(\text{s}_\tau)$) for two B-spline curves: $\beta_{\tau} = \text{B-Spline}_\theta(B_\tau)$, $V_{\tau} = \text{B-Spline}_\phi(B_\tau)$ where $\theta$ and $\phi$ are learnable control points for their respective splines. We employ a meta-learning-inspired \cite{hospedales2021meta} paradigm to train the MetaModel, wherein predicted bidding coefficients $\beta_\tau$ and future values $V_{\tau}^T$ are simultaneously evaluated against ground-truth optimal labels (obtained by Algo. \ref{algo_collection}) to compute a composite loss function:
$$\mathcal{L} = \frac{\text{1}}{|\mathcal{S}|}\sum_{s_\tau \in \mathcal{S}} \frac{1}{L} \sum_{k=1}^L(\|\beta_{\tau, k} - \hat\beta_{\tau, k} \| ^{\text{2}} + \|V_{\tau, k} - \hat{V}_{\tau, k} \| ^{\text{2}}).$$
Gradients are backpropagated through the dual spline modules to update the MetaModel end-to-end, enabling joint functional optimization of both budget-to-coefficient (B2C) and budget-to-value (B2V) mappings. As illustrated in Figure \ref{fig:meta_model}, the training process performs parallelized evaluation of mapping at $L$ anchor budgets $\{B_k\}_{k=1}^L$ under identical contextual features. This bath comparison against the corresponding coefficient target and value targets holistically quantifies the B-spline's functional approximation quality across the budget domain. This design breaks through the limitations of traditional point-wise optimization, accelerating convergence through Multi-Anchor Joint Gradients while enhancing model generalization \cite{finn2017model}.

\begin{algorithm}[t]
\caption{Optimal bidding Algorithm}
\label{alg:cost_gradient_alpha}
\begin{algorithmic}
\Statex \begin{center}
\includegraphics[width=0.8\linewidth]{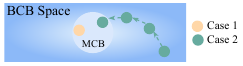}\end{center}
\State \textbf{Input:} current step $\tau$, state $\text{s}_\tau$, remaining budget $B_\tau$, target ROI $r_{\text{target}}$, historical cost$C_1^{\tau-1}$, historical value $V_1^{\tau-1}$, MetaModel $f_\Theta$, update scale $\lambda$, max iterations $K$
\State \textbf{Output:} optimal bidding coefficient $\beta^{\text{opt}}_\tau$

\State $\theta, \phi \leftarrow f_\Theta(\text{s}_\tau)$
\State $\beta_\tau \leftarrow f_\theta(B_\tau)$
\State $V_\tau^T \leftarrow f_\phi(B_\tau)$

\State $C_{\text{target}} \leftarrow B_\tau$ 
\State $\Delta_{\text{ROI}} \leftarrow V_1^{\tau-1} + V_\tau^T - r_{\text{target}}(C_1^{\tau-1} + C_{\text{target}})$

\If{$\Delta_{\text{ROI}} \geq 0$}
    \State $\beta^{\text{opt}}_\tau \leftarrow \beta_\tau$ \Comment{Case 1}
\Else
    \For{$k = 1$ to $K$}
        \State $g \leftarrow \frac{\partial (\Delta_{\text{ROI}})}{\partial C_{\text{target}}} $
        
        \State $C_{target} \leftarrow C_{target} + \lambda \cdot g $
        \State $V_\tau^T \leftarrow f_\phi(C_{target})$
        \State $\Delta_{\text{ROI}} \leftarrow V_1^{\tau-1} + V_\tau^T - r_{\text{target}}(C_1^{\tau-1} + C_{\text{target}})$
        \If{$\Delta_{\text{ROI}} \ge 0$}
            \State \textbf{break}
        \EndIf
    \EndFor
    \State $\beta^{\text{opt}}_\tau \leftarrow  f_\theta(C_{target})$ \Comment{Case 2}
\EndIf
\State \Return $\beta^{\text{opt}}_\tau$
\end{algorithmic}
\end{algorithm}

In practical deployment (Algo. \ref{alg:cost_gradient_alpha}), at each decision timestep $\tau$, we input multi-dimensional features into the MetaModel. This MetaModel generates control parameters for B-spline curves. Cost-to-Coefficient B-spline subsequently maps the advertiser’s remaining budget $C_{target} = B_\tau$ to a bidding coefficient $\beta_\tau$ under BCB assumptions. Then we leverage Cost-to-Value B-spline to map the target cost $C_{target}$ to the expected future value $V_\tau^T$. If $\Delta_{\text{ROI}} = V_1^{\tau-1} + V_\tau^T - r_{\text{target}}(C_1^{\tau-1} + C_{\text{target}}) \ge 0$, $\beta_\tau$ inherently fall in the feasible solution space of MCB and is deployed without modification. Conversely, if 
$\Delta_{\text{ROI}} < 0 $, we initiate a gradient-based correction protocol \cite{zhang2023evaluating} to dynamically adjust the target cost $C_{target}$ until ROI feasibility is achieved:
$$
\psi(C_{target}^{k}) = C_{target}^{k} + \lambda \cdot \frac{\partial}{\partial C_{target}^{k}}\left[\Delta_{\text{ROI}}\right]^{-}
$$
where $\lambda$ is the update scale. Then we get $\beta^{opt}_\tau = f_\theta(C_{target})$. 

This framework intrinsically supports both BCB and MCB bidding modes. Furthermore, the robust extrapolation capability of B-spline parameterization enables real-time adaptation to multi-scale constraint configurations and constraint modifications mid-period. Such adaptability guarantees continuous constraint satisfaction under non-stationary advertising environments while maintaining solution optimality.

\begin{table*}[h]
\centering
\small
\caption{\textbf{Evaluation results.} For value-related and budget-related metrics, the top-performing method is highlighted in boldface, while ROI constraint compliance rates exceeding 90\% are denoted with underlining.}
\label{table1}
\renewcommand{\arraystretch}{1.15}
\begin{tabular}{c|cc|ccccccccc}
\hline \hline
\textbf{Mode}                                       & \textbf{Budget}                 & \textbf{Metrics}         & \textbf{BC}    & \textbf{DT}    & \textbf{IQL}   & \textbf{PID}   & \textbf{MPC}   & \textbf{LP}    & \textbf{GAVE}  & \textbf{USCB}  & \textbf{Our}   \\ \hline
\multicolumn{1}{c|}{\multirow{15}{*}{MCB}} & \multirow{2}{*}{50\%}  & Conv     & 15.31&  16.54&  13.44&  8.1&  10.71&  11.62&  15.08&  14.92&  \textbf{18.58}    \\ \multicolumn{1}{c|}{}                      &                        & C.R. & \underline{0.938} & 0.854 & 0.792 & 0.729 & 0.708 & 0.833 & 0.833 & \underline{0.938} & \underline{0.917} \\ \cmidrule(r){2-3} \cmidrule(r){4-12}
\multicolumn{1}{c|}{}                      & \multirow{2}{*}{75\%}  & Conv     & 19.85&  22.96&  18.25&  10.33&  15.29&  17.19&  19.21&  19.69&  \textbf{25.46}  \\
\multicolumn{1}{c|}{}                      &                        & C.R. & \underline{0.938} & 0.854 & 0.771 & 0.667 & 0.688 & 0.812 & 0.812 & \underline{0.979} & \underline{0.938} \\
\cmidrule(r){2-3} \cmidrule(r){4-12} 
\multicolumn{1}{c|}{}                      & \multirow{2}{*}{100\%} & Conv     & 23.15&  28.5&  22.31&  13.1&  18.67&  22.1&  23.58&  23.5&  \textbf{31.35} \\
\multicolumn{1}{c|}{}                      &                        & C.R. & \underline{0.938} & 0.854 & 0.75  & 0.729 & 0.729 & 0.792 & 0.833 & \underline{0.938} & \underline{0.938} \\
\cmidrule(r){2-3} \cmidrule(r){4-12} 
\multicolumn{1}{c|}{}                      & \multirow{2}{*}{125\%} & Conv     & 25.94&  33.81&  25.77&  17.46&  24.23&  25.56&  28.48&  27.6&  \textbf{36.06}  \\
\multicolumn{1}{c|}{}                      &                        & C.R. & \underline{0.938} & 0.875 & 0.729 & 0.646 & 0.708 & 0.833 & 0.875 & \underline{0.979} & \underline{0.917} \\
\cmidrule(r){2-3} \cmidrule(r){4-12} 
\multicolumn{1}{c|}{}                      & \multirow{2}{*}{150\%} & Conv     & 27.6&  37.73&  29.98&  17.75&  29.6&  31.65&  29.58&  29.5&  \textbf{42.25}  \\
\multicolumn{1}{c|}{}                      &                        & C.R. & \underline{0.938} & 0.875 & 0.729 & 0.792 & 0.875 & \underline{0.917} & 0.833 & \underline{0.938} & \underline{0.938} \\
\hline \hline
\multicolumn{1}{c|}{\multirow{10}{*}{BCB}} & \multirow{2}{*}{50\%}  & Conv     & 16.31  & 19.94  & 18.71  & 11.33  & 16.54  & 15.35  & 16.14   & 14.25  & \textbf{24.15}  \\ 
\multicolumn{1}{c|}{}                      &                        & C/B     & 0.594 & 0.842 & 0.827 & 0.739 & 0.899 & 0.591 & 0.611 & 0.624 & \textbf{0.992} \\ \cmidrule(r){2-3} \cmidrule(r){4-12}
\multicolumn{1}{c|}{}                      & \multirow{2}{*}{75\%}  & Conv     & 21.6&  27.62&  26.81&  16.44&  21.04&  23.31&  19.27&  19.23&  \textbf{35.21}  \\
\multicolumn{1}{c|}{}                      &                        & C/B     & 0.508 & 0.759 & 0.792 & 0.715 & 0.847 & 0.612 & 0.575 & 0.547 & \textbf{0.993} \\ \cmidrule(r){2-3} \cmidrule(r){4-12}
\multicolumn{1}{c|}{}                      & \multirow{2}{*}{100\%} & Conv     & 23.42&  30.67&  34.21&  16.9&  27.06&  27.56&  24.35&  23.31&  \textbf{43.83} \\
\multicolumn{1}{c|}{}                      &                        & C/B     & 0.448 & 0.692 & 0.762 & 0.572 & 0.820 & 0.621 & 0.543 & 0.510 & \textbf{0.992} \\ \cmidrule(r){2-3} \cmidrule(r){4-12}
\multicolumn{1}{c|}{}                      & \multirow{2}{*}{125\%} & Conv     & 26.88&  36.81&  38.69&  23.75&  32.06&  33.23&  29.17&  28.9&  \textbf{51.27}  \\
\multicolumn{1}{c|}{}                      &                        & C/B     & 0.402 & 0.640 & 0.737 & 0.632 & 0.772 & 0.606 & 0.485 & 0.470 & \textbf{0.990} \\ \cmidrule(r){2-3} \cmidrule(r){4-12}
\multicolumn{1}{c|}{}                      & \multirow{2}{*}{150\%} & Conv     & 30.17&  39.04&  42.94&  26.19&  37.02&  35.35&  30.81&  30.38&  \textbf{58.96}  \\
\multicolumn{1}{c|}{}                      &                        & C/B     & 0.348 & 0.591 & 0.715 & 0.610 & 0.761 & 0.559 & 0.426 & 0.416 & \textbf{0.989} \\ \hline \hline
\end{tabular}
\end{table*}

\section{Experiments}
\subsection{Experiments Setup}
\textbf{Dataset.} We evaluate our approach using AuctionNet \cite{su2024auctionnet}, a standardized large-scale advertising bidding dataset that abstracts real-world auction dynamics while leveraging authentic bidding data to provide a unified benchmark for evaluating diverse bidding strategies. This dataset includes 21 days of bidding records, with each day containing approximately 500,000 impression opportunities temporally partitioned into 48 distinct bidding time-steps. Critically, each daily auction period involves participation from 48 unique advertisers with different scales of constraints. All evaluation methods emulate 48 distinct advertisers, conducting bidding under heterogeneous constraint magnitudes. And we partition the dataset into a 14-day training set (678 bidding trajectories ) and a 7-day test set (339 trajectories).

\noindent\textbf{Baselines.} We benchmark our approach against three categories of existing methods:
\begin{enumerate}
    \setlength\itemsep{-0.3em}
    \item \textbf{Rule-based Approaches}: 
    \vspace{-0.5em}
    \begin{itemize}
        \setlength\itemsep{-0.2em}
        \item \textbf{PID} \cite{johnson2005pid}: Adjusts bids dynamically using real-time error feedback.
        \item \textbf{MPC} \cite{mpc}: Solves constrained optimization over a finite time horizon.
        \item \textbf{LP} \cite{lp}: Formulates bidding as a constrained linear programming problem.
    \end{itemize}
    \item \textbf{Reinforcement Learning (RL) Methods} \cite{rl}:
    \vspace{-0.5em}
    \begin{itemize}
    \setlength\itemsep{-0.2em}
        \item \textbf{IQL} \cite{iql}: Offline RL leveraging value function approximation.
        \item \textbf{USCB} \cite{uscb}: A method for learning to dynamically adjust bid coefficients using the DDPG \cite{ddpg} algorithm.
    \end{itemize}
    \item \textbf{Imitation \cite{hussein2017imitation} \& Generative \cite{bond2021deep} Approaches}:
    \vspace{-0.5em}
    \begin{itemize}
    \setlength\itemsep{-0.2em}
        \item \textbf{BC} \cite{bc}: Mimics offline bidding decisions through supervised learning.
        \item \textbf{DT} \cite{dt}: Generates actions using return-conditioned sequence modeling.
        \item \textbf{GAVE} \cite{gave}: DT-based model, accommodates various advertising objectives through a score-based Return-To-Go (RTG) module.
    \end{itemize}
\end{enumerate}

\noindent\textbf{Evaluation Metrics.} For each advertiser, the core objective within every auction period is to maximize the aggregate value (conversions here) of winning impressions while adhering to predefined constraints. We categorize the evaluation paradigm into BCB and MCB. Under MCB settings, we formalize two evaluation metrics:\textbf{ Conversions (Conv)}: Average conversions exclusively during periods where ROI constraints are satisfied. Conversions from constraint-violating periods are strictly excluded. \textbf{Compliance Rate (C.R.):} Proportion of evaluation periods where ROI constraints are fully satisfied. For BCB scenarios (absent ROI constraints), two metrics are defined: \textbf{Conversions (Conv):} Average conversions. \textbf{Cost/Budget Ratio (C/B):} Measures actual expenditure relative to allocated budget.

\noindent\textbf{Implementation Details.} All experiments are conducted on NVIDIA V100 GPU. The sample number of $\beta$ in Algo. \ref{algo_collection} is 10. We use Adam optimizer to optimize MetaModel parameters with a learning rate of $1e^{-2}$. The state feature specifications and additional model hyperparameters are provided in the appendix. To ensure statistical significance, we conducted five independent experiments and report aggregated performance metrics.

\begin{figure}
    \centering
    \includegraphics[width=0.8\linewidth]{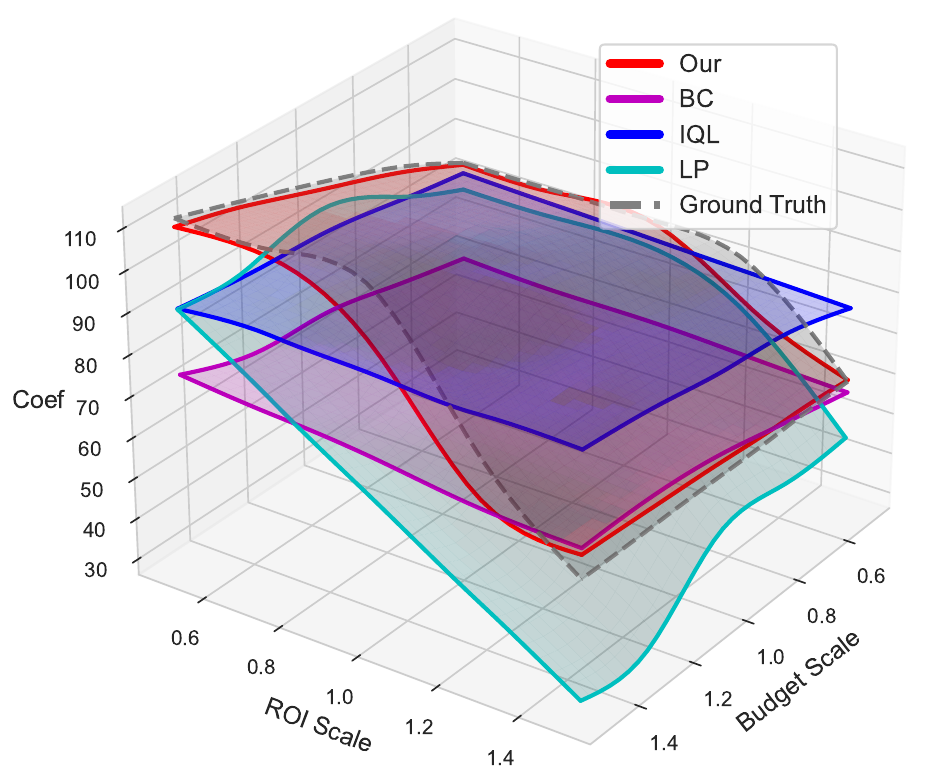}
    \caption{The variation of average bidding coefficients with different budget and ROI constraints. Each surface is interpolated from a grid of 25 discrete points (5 budget scales × 5 ROI scales).}
    \label{fig:3d}
    \vspace{-1.5em}
\end{figure}

\subsection{Evaluation Results} 
\textbf{Main Result.} Table \ref{table1} presents a comprehensive performance comparison of our method against baselines across varying bidding modes and budget scales. The results demonstrate that our approach achieves dominant performance across all bidding modes and budget scales. Under MCB settings, our method consistently maintains constraint compliance rates exceeding 90\% while maximizing value delivery. In BCB scenarios, our method exhibits precise budget-scale adaptability and achieves near-perfect budget utilization with cost/budget ratios around 99\% regardless of budget scales. The dual-mode efficacy establishes unprecedented robustness in real-world advertising operations. Traditional rule-based methods (PID, MPC, LP) underperformed due to relying exclusively on real-time error feedback. Reinforcement learning (IQL, USCB) and generative approaches (DT, GAVE\footnote{GAVE requires 400k+ trajectories in original work, due to training time constraint, our dataset size is insufficient to meet its training requirements.}) outperformed imitation methods (BC), as both categories incorporate guidance mechanisms during training or inference. However, imitation methods cannot intrinsically discern suboptimal actions, resulting in indiscriminate behavioral replication without value discrimination.

\noindent\textbf{Constraint Awareness.} Our method demonstrates superior constraint-awareness in dynamically adapting coefficients to varying budget and ROI configurations, as visualized in Fig. \ref{fig:3d}. While data-driven approaches (BC, IQL) exhibit limited adjustment responsiveness due to the limitation of the optimization algorithm, model-based methods (LP) fail to achieve competitive scores despite detecting constraint shift due to the lack of environmental dynamics modeling capabilities for fine adaptation.

\begin{figure}
    \centering
    \includegraphics[width=0.8\linewidth]{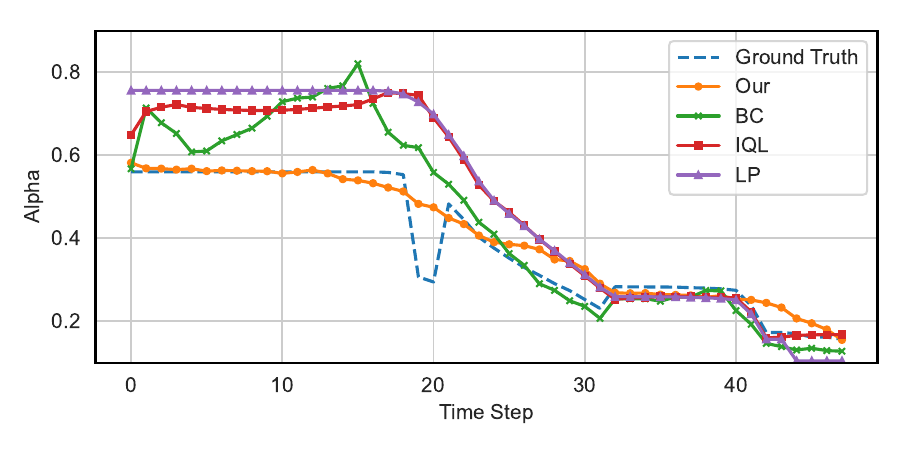}
    \caption{During a bidding period, the variation in $\alpha$ (defined in Lemma \ref{lemma:roi_adjustment}).}
    \label{fig:alpha compare}
    \vspace{-1em}
\end{figure}

\noindent\textbf{Coefficient Adjustment.} As shown in Fig. \ref{fig:alpha compare}, our method achieves precise $\alpha$ prediction ($\beta^{opt} / \beta^*$), enabling proactive bid suppression during early auction stages to ensure terminal constraint satisfaction. In contrast, baseline approaches exhibit delayed response, resulting in violating the ROI constraint. This occurs because late-period corrective adjustments cannot offset the irrecoverable ROI deficit accumulated during initial phases. This mechanistic divergence manifests quantitatively in $\text{ROI}_{realized} / \text{ROI}_{target}$ ratios: Our: 1.0204; BC: 0.9265; IQL: 0.7657; LP: 0.7632.

\vspace{-1em}
\section{Conclusions}
This paper introduces a theoretically grounded framework for multi-scale constrained advertising bidding. We first derive the optimality conditions under budget constraints, which inspires the development of the Hindsight Mechanism. Subsequently, we establish that the adjustment factor $\alpha$ for ROI constraints depends critically on the continuous relationship of cost and value. This insight motivates us to use the parameterized B-spline to model continuous value/cost relationships. The resulting framework accurately predicts both base bidding coefficients and adjustment factors to satisfy budget and ROI constraints simultaneously. Comprehensive experiments demonstrate significant improvements over state-of-the-art methods on advertising datasets.

\appendix
\bibliography{aaai2026}

\section{The Algorithm of obtaining $V^{oracle}$}

Consider the following ad examples:

\begin{table}[h]
\centering
\caption{Ad Selection Example ($B=4$, $r_{\text{target}}=1.0$)}
\begin{tabular}{c|c|c|c}
Opportunity & $v_i$ & $c_i$ & Efficiency ($v_i/c_i$) \\ \hline
A & 4 & 3 & 1.33 \\
B & 3 & 2 & 1.50 \\
C & 2 & 1 & 2.00 \\
\end{tabular}
\end{table}

The fixed coefficient greedy strategy  selects ads in descending efficiency order:
\begin{enumerate}
\item Select C: cost=1, value=2, remaining budget=3
\item Select B: cost=2, value=3, total value=5, remaining budget=1
\item Skip A (cost=3 $>$ remaining budget=1)
\textbf{Final value: 5}
\end{enumerate}

The globally optimal solution combines different ads: Select A+C: cost=3+1=4, value=4+2=6 \textbf{Optimal value: 6}

The greedy approach fails because:
1. \textbf{Budget fragmentation}: Selecting B first leaves insufficient budget for higher-value A
2. \textbf{Combinatorial effects}: A+C synergy (value=6) exceeds B+C (value=5)
3. \textbf{Efficiency deception}: A's lower efficiency (1.33) hides its value potential in combination

So the fixed coefficient strategy can not always obtain the global optimal value, we give the global optimal solution (Mixed Integer Linear Programming) for the constrained bidding problem in Algo. \ref{alg:milp_dual_constrained} However, this multi-variable algorithm is extremely inefficient.

\begin{algorithm}
\caption{MILP Solver for $V^{\text{oracle}}$}
\label{alg:milp_dual_constrained}
\begin{algorithmic}[1]
\Require Value vector $\mathbf{v}$, cost vector $\mathbf{c}$, budget $B$, target ROI $r_{\text{target}}$
\Ensure Optimal value $V^*$, binary selection vector $\mathbf{x}^* \in \{0,1\}^n$
\Statex
\Function{MILP}{$\mathbf{v}, \mathbf{c}, B, r_{\text{target}}$}
\State \textbf{Define binary variables:} $x_i \in \{0,1\} \quad \forall i=1,\dots,n$
\State \textbf{Objective:} $\max \sum_{i=1}^n v_i x_i$
\State \textbf{Subject to:}
\State $\quad \sum_{i=1}^n c_i x_i \leq B$ \Comment{Budget constraint}
\State $\quad \sum_{i=1}^n (v_i - r_{\text{target}} c_i) x_i \geq 0$ \Comment{ROI constraint}
\Statex
\State \textbf{Solve MILP} using branch-and-cut
\State \Return $V^* = \text{objval}, \mathbf{x}^* = [x_i]$
\EndFunction
\end{algorithmic}
\end{algorithm}

\section{Proof of Lemma 1}
\begin{lemma}
For any fixed coefficient bidding strategy with optimal coefficient $\beta^*$, the achieved total value $V^{\text{fixed}}$ satisfies:
\begin{equation}
V^{\text{fixed}} > V^{\text{oracle}} - v_{\max}
\end{equation}
where $V^{\text{oracle}}$ is the global optimal value satisfying both budget and ROI constraints, and $v_{\max} = \max_{i \in \mathcal{I}} v_i$ is the maximum value of any single impression opportunity.
\end{lemma}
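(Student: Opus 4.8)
The plan is to compare $V^{\text{fixed}}$ with $V^{\text{oracle}}$ through the fractional (LP) relaxation of the MCB integer program, exploiting the fact that the relaxation's optimum is an efficiency-threshold solution — exactly the form a fixed coefficient produces. To set notation, sort the impressions so that $v_1/c_1 \ge \cdots \ge v_N/c_N$ and put $w_i := v_i - r_{\text{target}}c_i = c_i(v_i/c_i - r_{\text{target}})$, so the $w_i$ are positive then negative along this order; in particular $w_{k+1}>0$ forces $w_1,\dots,w_k>0$. A fixed coefficient $\beta$ bids $\beta v_i$ and wins $i$ iff $v_i/c_i \ge 1/\beta$, so its winning set is a prefix, and every prefix arises as $\beta$ ranges over $[0,\infty)$; hence
$$V^{\text{fixed}} = \max\Big\{\ \sum_{i=1}^{k} v_i \ :\ 0\le k\le N,\ \sum_{i=1}^k c_i \le B,\ \sum_{i=1}^k w_i \ge 0\ \Big\},$$
the best value over budget- and ROI-feasible prefixes (the empty prefix is feasible, so this is well defined). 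Let $V^{LP}$ be the optimum of the relaxation with $x_i\in[0,1]$, $\sum_i c_i x_i\le B$, $\sum_i w_i x_i\ge 0$; the omniscient integral selection is LP-feasible, so $V^{\text{oracle}}\le V^{LP}$.

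The crux is to show $V^{LP}$ is attained by a prefix plus at most a fractional copy of the next item. I would argue by exchange: if an optimal $x^\star$ had indices $i<j$ with $x^\star_i<1$ and $x^\star_j>0$, then for small $\delta>0$ the update $x_i\leftarrow x_i+\delta c_j$, $x_j\leftarrow x_j-\delta c_i$ leaves $\sum_k c_k x_k$ unchanged, increases both $\sum_k w_k x_k$ and the objective by $\delta\,c_i c_j\,(v_i/c_i - v_j/c_j)\ge0$, and stays feasible — contradicting optimality unless $v_i/c_i=v_j/c_j$. Hence, breaking efficiency ties by an infinitesimal perturbation and then passing to the limit, some optimal $x^\star$ equals $1$ on a prefix $\{1,\dots,k\}$, equals $\theta\in[0,1]$ on item $k{+}1$, and $0$ afterwards, so $V^{LP}=\sum_{i=1}^k v_i+\theta v_{k+1}$.

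It remains to note that the head prefix $\{1,\dots,k\}$ is itself MCB-feasible: dropping the nonnegative term $\theta c_{k+1}$ preserves $\sum_{i=1}^k c_i\le B$, and $\sum_{i=1}^k w_i\ge -\theta w_{k+1}$, which is $\ge 0$ when $w_{k+1}\le0$ and also when $w_{k+1}>0$ (then $w_1,\dots,w_k>0$, so the sum is positive). Therefore $\sum_{i=1}^k v_i\le V^{\text{fixed}}$, and
$$V^{\text{oracle}}\ \le\ V^{LP}\ =\ \sum_{i=1}^k v_i+\theta v_{k+1}\ \le\ V^{\text{fixed}}+\theta v_{\max}\ <\ V^{\text{fixed}}+v_{\max},$$
the strict inequality from $\theta<1$; when $\theta=1$ or no fractional item appears, the relaxation optimum is already a feasible prefix, so $V^{\text{oracle}}\le V^{LP}\le V^{\text{fixed}}<V^{\text{fixed}}+v_{\max}$. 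Rearranging yields $V^{\text{fixed}}>V^{\text{oracle}}-v_{\max}$. I expect the exchange step to be the main obstacle — establishing that the ROI constraint does not destroy the prefix structure of the relaxed optimum, and dealing cleanly with ties/degeneracy; the final observation, that the integral head of that optimum is automatically budget- and ROI-feasible, is precisely what pins the gap to a single $v_{\max}$ rather than the $2v_{\max}$ a generic two-constraint-knapsack bound would give.
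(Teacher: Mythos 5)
Your proof follows essentially the same route as the paper's: pass to the fractional (LP) relaxation, establish that its optimum is an efficiency-sorted prefix plus at most one fractional item, and bound the gap by that single item's value to get $V^{\text{oracle}} \le V^{LP} < V^{\text{fixed}} + v_{\max}$. Your two refinements --- the exchange argument proving the prefix structure (which the paper merely asserts) and the observation that the integral head of the LP optimum is automatically budget- and ROI-feasible via the sign pattern of $w_i = v_i - r_{\text{target}} c_i$ (which replaces the paper's murkier $m<k$ case analysis) --- make the argument tighter, but the underlying idea is identical.
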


\newcommand{\rtext}{r_{\text{target}}}
\newcommand{\Vfrac}{V^{\text{frac}}}
\newcommand{\Voracle}{V^{\text{oracle}}}
\newcommand{\Vfixed}{V^{\text{fixed}}}
\newcommand{\Vgreedy}{V^{\text{greedy}}}
\newcommand{\vmax}{v_{\text{max}}}
\begin{proof}
Define the fractional relaxation problem:
\begin{align*}
\max &\sum v_i y_i \\
\text{s.t.} &\sum c_i y_i \leq B \\
&\sum (v_i - \rtext c_i) y_i \geq 0 \\
&0 \leq y_i \leq 1
\end{align*}
has optimal value $\Vfrac$. Since any integer solution is feasible for this relaxation ($x_i \in \{0,1\} \subset [0,1]$), we have:
\begin{equation}
\Voracle \leq \Vfrac
\end{equation}
This follows from the fundamental principle of linear programming relaxation: expanding the feasible region cannot decrease the optimal value.

The fractional optimum is achieved by:
\begin{equation}
\Vfrac = \sum_{i=1}^k v_i + r \cdot v_{k+1}
\end{equation}
where:
\begin{itemize}
\item Ads are sorted by efficiency $e_i = v_i/c_i$ descending
\item $k$ is the largest integer satisfying:
  \begin{align}
  \sum_{i=1}^k c_i \leq B \quad \text{and} \quad \frac{\sum_{i=1}^k v_i}{\sum_{i=1}^k c_i} \geq \rtext
  \end{align}
\item $r \in (0,1)$ is the solution to:
  \begin{align}
  \sum_{i=1}^k c_i + r \cdot c_{k+1} &= B \\
  \frac{\sum_{i=1}^k v_i + r \cdot v_{k+1}}{\sum_{i=1}^k c_i + r \cdot c_{k+1}} &\geq \rtext
  \end{align}
\end{itemize}
This structure is optimal because:
\begin{enumerate}
\item Sorting by $e_i$ maximizes value per cost
\item Full selection of first $k$ ads uses the most efficient impressions
\item Fractional selection of ad $k+1$ exactly exhausts budget while maintaining ROI
\end{enumerate}

The fixed coefficient strategy:
\begin{enumerate}
\item Sort ads by $e_i$ descending
\item Select maximal set $S = \{1,\dots,m\}$ satisfying:
  \begin{align}
  \sum_{i \in S} c_i \leq B \quad \text{and} \quad \frac{\sum_{i \in S} v_i}{\sum_{i \in S} c_i} \geq \rtext
  \end{align}
\end{enumerate}
yields value $\Vfixed = \sum_{i=1}^m v_i$. By construction, $m \leq k$ because:
the fixed coefficient strategy selects a subset of the fractionally optimal ads: $S \subseteq \{1,\dots,k+1\}$

Suppose ad $j > k+1$ is selected. Since $e_j \leq e_{k+1} \leq \cdots \leq e_1$, adding $j$ would:
\begin{itemize}
\item Violate ROI if $e_j < \rtext$ (cumulative ROI decreases)
\item Or be suboptimal (more efficient ads available)
\end{itemize}
Thus the strategy only selects from $\{1,\dots,k+1\}$.

The value difference is:
\begin{align}
\Vfrac - \Vfixed &= \left( \sum_{i=1}^k v_i + r \cdot v_{k+1} \right) - \sum_{i=1}^m v_i
\end{align}
Case analysis:
\begin{itemize}
\item If $m = k$ (strategy takes all full ads):
  \[
  \Vfrac - \Vfixed = r \cdot v_{k+1}
  \]
\item If $m < k$ (ROI constraint prevents full selection):
  \begin{align}
  \Vfrac - \Vfixed &= \sum_{i=m+1}^k v_i + r \cdot v_{k+1} \\
  &< \sum_{i=m+1}^{k+1} v_i \quad \text{(since $r < 1$)}
  \end{align}
\end{itemize}
In both cases:
\begin{align}
\Vfrac - \Vfixed &< v_{k+1} + \sum_{i=m+1}^k v_i \\
&\leq (k - m + 1) \cdot \max_{i \geq m+1} v_i \\
&\leq n \cdot \vmax \quad \text{(worst-case)}
\end{align}
But tighter bound exists: $\Vfrac - \Vfixed < \vmax$

The strategy selects maximal $m$ satisfying constraints. If $m < k$, then:
\[
\frac{\sum_{i=1}^m v_i}{\sum_{i=1}^m c_i} \geq \rtext > \frac{\sum_{i=1}^m v_i + v_j}{\sum_{i=1}^m c_i + c_j} \quad \forall j > m
\]
which implies $v_j < \rtext c_j \leq \vmax$. Thus:
\[
\Vfrac - \Vfixed \leq \max_{j > m} v_j < \vmax
\]

Chaining inequalities:
\begin{align}
\Voracle &\leq \Vfrac  \\
\Vfrac &< \Vfixed + \vmax 
\end{align}
Thus:
\begin{align}
\Voracle &< \Vfixed + \vmax
\end{align}
Rearranging gives the guarantee:
\begin{equation}
\boxed{\Vfixed > \Voracle - \vmax}
\end{equation}

The bound is tight when:
\begin{itemize}
\item $r \to 1^-$ and $v_{k+1} = \vmax$
\item Strategy selects $m = k$ (all full ads)
\item Fractional solution gains $\vmax$ while integer cannot
\end{itemize}
In advertising systems, $\vmax \ll \Voracle$ (typically $\vmax/\Voracle < 0.01$).
\end{proof}

\section{Proof of Lemma 2}
\begin{lemma}[Optimality Condition for FCS Under BCB]
 Under budget-constrained bidding with fixed coefficient Strategy, at any decision step $\tau$ with remaining budget $B_{\tau}$, if $\exists \beta^*$ such that
$$
\sum_{t=\tau}^T C_t(\beta^*) = B_{\tau},
$$
then $\beta^*$ is the optimal coefficient on $[\tau,T]$.
\end{lemma}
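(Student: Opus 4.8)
The plan is to establish optimality of $\beta^*$ through an exchange/perturbation argument, leveraging the linear-programming structure of the BCB problem. Recall that the optimal bidding strategy for BCB has the form $b_i = \beta v_i$, so a fixed coefficient $\beta$ induces a threshold rule: impression $i$ is won on step $t$ precisely when $\beta v_i \ge c_i$, i.e.\ when the efficiency $v_i/c_i$ exceeds the reciprocal threshold $1/\beta$. Thus raising $\beta$ only adds impressions (in order of decreasing efficiency) and monotonically increases both cumulative cost $\sum_{t=\tau}^T C_t(\beta)$ and cumulative value $\sum_{t=\tau}^T V_t(\beta)$. The hypothesis says $\beta^*$ is exactly the coefficient that makes the realized cost equal the remaining budget $B_\tau$.

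First I would make precise the claim that among \emph{all} feasible fixed-coefficient strategies on $[\tau,T]$, the one exhausting the budget is the value-maximizer: since value is nondecreasing in $\beta$ and cost is nondecreasing in $\beta$, any $\beta < \beta^*$ spends strictly less and wins a strict subset of the impressions won under $\beta^*$, hence yields no more value; any $\beta > \beta^*$ violates the budget constraint $\sum_{t=\tau}^T C_t(\beta) \le B_\tau$. So $\beta^*$ is optimal within the FCS class. Second, I would argue that restricting to the FCS class loses nothing at optimality for BCB: the fractional LP relaxation of the knapsack-type problem $\max \sum v_i x_i$ s.t.\ $\sum c_i x_i \le B_\tau$ is solved by the greedy efficiency-ordering rule, and a fixed coefficient $\beta^* = 1/e_{\text{threshold}}$ realizes exactly that greedy ordering; the only slack is the fractional item at the boundary, which contributes the $o(1)$ gap already quantified in Lemma~\ref{lemma_error_bound}. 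Under the stated hypothesis that the budget is met \emph{with equality} by integral selections, even that boundary slack vanishes, so $\beta^*$ attains the LP optimum and a fortiori the integer optimum.

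The key steps in order: (1) reduce the fixed-$\beta$ strategy to an efficiency threshold and record monotonicity of cost and value in $\beta$; (2) show $\beta^*$ is optimal within FCS by the two-sided perturbation above; (3) invoke the LP-relaxation / greedy-exchange argument to show no non-fixed strategy beats the best FCS by more than the fractional boundary term, which is zero under the equality hypothesis; (4) conclude $\beta^*$ is optimal on $[\tau,T]$.

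The main obstacle I anticipate is step (3): carefully handling the case where several impressions share the same efficiency $v_i/c_i$ at the threshold (ties), so that the "exactly exhausts $B_\tau$" hypothesis is achievable by an integral selection rather than only a fractional one. I would address this by noting that the hypothesis $\sum_{t=\tau}^T C_t(\beta^*) = B_\tau$ is itself the assumption that such an integral budget-exhausting selection exists at coefficient $\beta^*$; combined with the greedy optimality of efficiency ordering for fractional knapsack and the fact that an integral solution matching the fractional optimum is automatically optimal, this closes the gap. A secondary subtlety is that costs $c_i$ are set by the second-price mechanism and are exogenous to the bid, so the threshold characterization is clean — but I would state this explicitly to justify that winning more impressions cannot decrease value.
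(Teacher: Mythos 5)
Your proof takes essentially the same route as the paper's: both reduce the fixed-coefficient rule to an efficiency threshold ($i$ is won iff $v_i/c_i \ge 1/\beta$) and then argue by two-sided perturbation that any $\beta' > \beta^*$ overshoots the budget by admitting less efficient impressions, while any $\beta' < \beta^*$ underutilizes it and forfeits value, so the budget-exhausting coefficient is optimal. Your extra step (3) --- matching the fractional LP optimum under the exact-exhaustion hypothesis --- proves something slightly stronger than the lemma actually claims (optimality among \emph{coefficients}, i.e.\ within the FCS class); the paper instead defers the FCS-versus-oracle comparison to Lemma \ref{lemma_error_bound}'s $v_{\max}$ bound, but your addition is correct and does no harm.
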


\begin{proof}
Define the efficiency metric $e_i = v_i / c_i$.

Consider the bid condition for winning opportunity $i$:
\begin{equation}
\mathrm{bid}_i \geq c_i \iff \mathrm{\beta} \geq \frac{c_i}{v_i} = \frac{1}{e_i}
\end{equation}

Sort opportunities by decreasing efficiency: $e_1 \geq e_2 \geq \cdots \geq e_n$. A greedy algorithm selects opportunities in this order until the budget is exhausted. Let $k$ be the last selected opportunity satisfying:
\begin{equation}
\sum_{i=1}^k c_i \leq B < \sum_{i=1}^{k+1} c_i
\end{equation}
The efficiency threshold is $e^* = e_k$. The optimal coefficient is:
\begin{equation}
\mathrm{\beta}^* = \frac{1}{e^*} = \frac{c_k}{v_k}
\end{equation}

With $\mathrm{\beta}^*$, the advertiser wins all opportunities where:
\begin{equation}
e_i \geq e^* \iff \frac{1}{e_i} \leq \frac{1}{e^*} \iff \mathrm{\beta}^* \geq \frac{c_i}{v_i}
\end{equation}
The total cost is exactly $B$ by selection criterion.

Now consider any alternative coefficient $\mathrm{\beta}' \neq \mathrm{\beta}^*$:

\textbf{Case 1:} $\mathrm{\beta}' > \mathrm{\beta}^*$ \\
The winning set under $\beta'$ expands to include inefficient opportunities:
\begin{align}
S' &= \{ i \mid e_i \geq 1/\beta' \} \\
S^* &= \{ i \mid e_i \geq e_{\min} = 1/\beta^* \} \\
S' \setminus S^* &= \{ j \mid 1/\beta' \leq e_j < e_{\min} \} \neq \emptyset
\end{align}
The set $S' \setminus S^*$ is non-empty because $\beta' > \beta^* \implies 1/\beta' < 1/\beta^* = e_{\min}$.

Total cost exceeds budget:
\begin{align}
\sum_{i \in S'} c_i &= \sum_{i \in S^*} c_i + \sum_{j \in S' \setminus S^*} c_j \\
&= B + \underbrace{\sum_{j \in S' \setminus S^*} c_j}_{\epsilon > 0} > B
\end{align}
This violates the budget constraint.

For any $j \in S' \setminus S^*$:
\begin{align}
v_j = e_j c_j < e_{\min} c_j = \frac{c_j}{\beta^*}
\end{align}
The resources spent on $j$ ($c_j$) could have been allocated to more efficient opportunities in $S^*$. The marginal opportunity cost is:
\begin{align}
\Delta v_j = \max_{k \notin S'} \left( e_k - e_j \right) c_j \geq (e_{\min} - e_j) c_j > 0
\end{align}
since $e_k \geq e_{\min} > e_j$ for $k \in S^* \setminus S'$ (if any).

In real bidding systems, budget overrun causes:
\begin{itemize}
\item Auction disqualification for later opportunities
\item Actual realized value $V_{\text{actual}} \leq \sum_{i \in S'} v_i - \sum_{j \in \text{rejected}} v_j$
\end{itemize}
Let $R \subseteq S'$ be the set of auctions rejected due to budget overrun. Then:
\begin{align}
V_{\text{actual}} &= \sum_{i \in S' \setminus R} v_i \\
&\leq \sum_{i \in S^*} v_i
\end{align}
with strict inequality when $R \neq \emptyset$.

The inclusion of inefficient opportunities reduces overall value density:
\begin{align}
\frac{\sum_{i \in S'} v_i}{\sum_{i \in S'} c_i} &< \frac{\sum_{i \in S^*} v_i}{B} \\
\text{since} \quad \sum_{j \in S'\setminus S^*} v_j &< \frac{\sum_{j \in S'\setminus S^*} c_j}{B} \sum_{i \in S^*} v_i
\end{align}
This follows from $e_j < e_{\min} \leq \text{avg}(e_i)_{i\in S^*}$.

Suppose $S'$ were optimal. But:
\begin{align}
\sum_{i \in S^*} v_i &> \sum_{i \in S'} v_i - \epsilon \cdot v_{\max} \\
&\geq V_{\text{actual}} \quad \text{(for small $\epsilon$)}
\end{align}
contradicting optimality. Thus $S'$ cannot be optimal.

\textbf{Case 2:} $\mathrm{\beta}' < \mathrm{\beta}^*$ \\
This raises the winning threshold to $1/\mathrm{\beta}' > e_{\min}$. The advertiser loses some efficient opportunities in $\{1,\ldots,k\}$:
\begin{align}
\sum_{i\in S'} c_i &< B \quad \text{(budget underutilization)} \\
\sum_{i\in S'} v_i &< \sum_{i=1}^k v_i \quad \text{(since $S' \subset \{1,\ldots,k\}$)}
\end{align}
where $S'$ is the winning set under $\mathrm{\beta}'$.

Therefore $\mathrm{\beta}^*$ uniquely satisfies both budget exhaustion and value maximization.
\end{proof}

\section{Proof of Lemma 3 }

\begin{lemma}[ROI Monotonicity]
The expected ROI function $R(\beta)$ is non-increasing in $\beta$.
\end{lemma}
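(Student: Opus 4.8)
The plan is to reduce the claim to a deterministic, per-instance statement and then take expectations. Fix a realization of the opportunity set $\mathcal{I}$ and write $e_i = v_i/c_i$ for the efficiency of opportunity $i$ (costs are positive). Under a fixed coefficient $\beta$, opportunity $i$ is won exactly when $\beta v_i \ge c_i$, i.e.\ when $e_i \ge 1/\beta$, so the realized ROI on this instance is $$\rho(\beta) = \frac{\sum_i v_i x_i(\beta)}{\sum_i c_i x_i(\beta)} = \frac{\sum_{i \in S(\beta)} e_i c_i}{\sum_{i \in S(\beta)} c_i}, \qquad S(\beta) := \{\, i : e_i \ge 1/\beta \,\},$$ which is precisely the cost-weighted average of the efficiencies over the winning set $S(\beta)$. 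Since $1/\beta$ decreases in $\beta$, the winning set $S(\beta)$ is non-decreasing under inclusion as $\beta$ grows; this monotone-growth property is the structural fact driving everything.

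Next I would fix $\beta_1 < \beta_2$ with $S(\beta_1) \neq \emptyset$ and decompose $S(\beta_2) = S(\beta_1) \cup \Delta$ as a disjoint union, $\Delta := S(\beta_2) \setminus S(\beta_1)$. Every $j \in \Delta$ lost under $\beta_1$, so $e_j < 1/\beta_1$; every $i \in S(\beta_1)$ has $e_i \ge 1/\beta_1$, so the cost-weighted average over $S(\beta_1)$ satisfies $\rho(\beta_1) \ge 1/\beta_1$, while the cost-weighted average over $\Delta$ is strictly below $1/\beta_1$ (when $\Delta \neq \emptyset$). I would then invoke the elementary mediant inequality — for positive denominators, $A/B \ge C/D$ implies $A/B \ge (A+C)/(B+D) \ge C/D$ — taking $A/B$ to be the aggregate $\bigl(\sum_{i \in S(\beta_1)} e_i c_i\bigr)/\bigl(\sum_{i \in S(\beta_1)} c_i\bigr)$ and $C/D$ the aggregate over $\Delta$. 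Since $(A+C)/(B+D)$ equals $\rho(\beta_2)$ exactly (the union being disjoint), this yields $\rho(\beta_2) \le \rho(\beta_1)$, with equality when $\Delta = \emptyset$. Hence $\rho$ is non-increasing in $\beta$ on every realization.

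Finally, since $\rho(\cdot)$ is non-increasing pointwise, monotonicity of expectation gives $R(\beta_2) = \mathbb{E}[\rho(\beta_2)] \le \mathbb{E}[\rho(\beta_1)] = R(\beta_1)$, which is the assertion. The one genuinely fiddly point — and the step I would be most careful about — is the degenerate regime where the winning set is empty and $\rho$ is a $0/0$ form; I would dispose of it either by restricting $\beta$ to the range where $S(\beta) \neq \emptyset$ (noting that this range is an up-set, since $S(\beta)$ only grows) or, equivalently, by the convention $\rho(\beta) = +\infty$ there, which is consistent with non-increasingness. An equivalent route, worth mentioning as the incremental form of the same argument, is to track the jumps of $\rho$ directly: as $\beta$ crosses $1/e_k$ the numerator and denominator jump by $v_k$ and $c_k$, and since every already-won opportunity has efficiency at least $e_k$, the resulting ratio cannot increase. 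Beyond this bookkeeping I do not anticipate a real obstacle — the whole content is the single weighted-average observation, so the work is just stating the mediant inequality and the set-monotonicity of $S(\beta)$ precisely.
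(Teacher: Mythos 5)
Your proof is correct and is essentially the paper's argument: both fix a sample path, observe that the realized ROI is the cost-weighted average of the efficiencies $e_i=v_i/c_i$ over a winning set that only grows with $\beta$, show that the newly admitted opportunities have efficiency below the current average (the paper telescopes one opportunity at a time via exactly the mediant-type algebra you cite, while you apply the mediant inequality to the whole increment $\Delta$ in one step), and conclude by monotonicity of expectation. Your explicit handling of the empty-winning-set $0/0$ case is a minor point the paper leaves implicit; otherwise the two arguments coincide.
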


\begin{proof}
Consider a fixed sample path (i.e., fixed set of projects with parameters). Sort projects by efficiency ratio $e_i = v_i / c_i$ in descending order: $e_{(1)} \geq e_{(2)} \geq \cdots \geq e_{(n)}$, with corresponding minimal coefficients $\beta_{(1)} \leq \beta_{(2)} \leq \cdots \leq \beta_{(n)}$. The decision rule is $x_i(\beta) = \mathbb{1}_{\{\beta \geq \beta_i\}}$. Divide the $\beta$-axis into intervals:
\begin{itemize}
    \item For $\beta < \beta_{(1)}$, no projects are selected.
    \item For $\beta \in [\beta_{(k)}, \beta_{(k+1)})$, the first $k$ projects are selected.
    \item For $\beta \geq \beta_{(n)}$, all $n$ projects are selected.
\end{itemize}
In each interval $[\beta_{(k)}, \beta_{(k+1)})$, the ratio $R(\beta)$ is constant:
\begin{equation}
R_k = \dfrac{\sum_{i=1}^{k} v_{(i)}}{\sum_{i=1}^{k} c_{(i)}}
\end{equation}
We prove that $\{R_k\}$ is non-increasing: $R_k \geq R_{k+1}$ for all $k$.

\textbf{Base case ($k=1$):}
\begin{equation}
R_1 = \frac{v_{(1)}}{c_{(1)}} = e_{(1)}, \quad R_2 = \frac{v_{(1)} + v_{(2)}}{c_{(1)} + c_{(2)}}
\end{equation}
Since $e_{(1)} \geq e_{(2)}$:
\begin{equation}
\begin{split}
v_{(1)} c_{(2)} \geq v_{(2)} c_{(1)} &\implies v_{(1)} (c_{(1)} + c_{(2)}) \geq (v_{(1)} + v_{(2)}) c_{(1)} \\ &\implies R_1 \geq R_2
\end{split}
\end{equation}

\textbf{General case:} For $k \geq 1$,
\begin{equation}
R_{k+1} = \dfrac{V_k + v_{(k+1)}}{C_k + c_{(k+1)}}, \quad \text{where} \quad V_k = \sum_{i=1}^{k} v_{(i)}, \quad C_k = \sum_{i=1}^{k} c_{(i)}
\end{equation}
Since $e_{(k+1)} \leq \min_{i=1}^{k} e_{(i)} \leq R_k$:
\begin{equation}
\dfrac{v_{(k+1)}}{c_{(k+1)}} \leq \dfrac{V_k}{C_k} \implies v_{(k+1)} C_k \leq V_k c_{(k+1)}
\end{equation}
Then:
\begin{equation}
\begin{split}
& v_{(k+1)} C_k + V_k C_k \leq V_k c_{(k+1)} + V_k C_k \\ &\implies (V_k + v_{(k+1)}) C_k \leq V_k (C_k + c_{(k+1)})
\end{split}
\end{equation}
Thus:
\begin{equation}
\dfrac{V_k + v_{(k+1)}}{C_k + c_{(k+1)}} \leq \dfrac{V_k}{C_k} \implies R_{k+1} \leq R_k
\end{equation}

Therefore, $R(\beta)$ is non-increasing on each sample path. By the monotonicity of expectation, $\mathbb{E}[R(\beta)]$ is non-increasing in $\beta$.
\end{proof}

\section{Proof of Lemma 4}
\begin{lemma}[MCB Coefficient Adjustment]
Under budget and ROI constraint with fixed coefficient strategy, at any decision step $\tau$ with history cost $C_1^{\tau-1}$ and history value $V_1^{\tau-1}$, if $\exists \alpha\le1$ such that $\beta_{opt} = \alpha\beta^*$ applied on $[\tau,T]$ can make the total realized ROI equal to ROI constraint, then 
\[
\beta^{*} \int_{\alpha}^{1} \left[ \gamma\left( \beta^{*} u \right) - r_{\mathrm{target}} \eta\left( \beta^{*} u \right) \right] du = \Delta_{\mathrm{ROI}}
\]
where $\gamma = \frac{\partial V}{\partial \beta}$ is derivative of value with respect to coefficient, $\eta = \frac{\partial C}{\partial \beta}$ is derivative of cost with respect to coefficient, $\Delta_{\text{ROI}} = V_1^{\tau-1} + V_\tau^T(\beta^*) - r_{target}(C_1^{\tau-1} + C_\tau^T(\beta^*))$ with $\beta^*$ the reference coefficient.
\end{lemma}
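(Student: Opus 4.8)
The plan is to begin from the equation that \emph{defines} $\alpha$ --- namely that applying the shaded coefficient $\beta_{\mathrm{opt}} = \alpha\beta^{*}$ over $[\tau,T]$ drives the realized period ROI exactly to $r_{\mathrm{target}}$ --- and then to rewrite the two unknown quantities $V_\tau^T(\alpha\beta^{*})$ and $C_\tau^T(\alpha\beta^{*})$ in terms of the reference values $V_\tau^T(\beta^{*})$, $C_\tau^T(\beta^{*})$ via the fundamental theorem of calculus, so that the stated integral identity drops out by collecting terms.

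\textbf{Step 1.} Make the hypothesis explicit: the realized ROI equality
\[
\frac{V_1^{\tau-1} + V_\tau^T(\alpha\beta^{*})}{C_1^{\tau-1} + C_\tau^T(\alpha\beta^{*})} = r_{\mathrm{target}}
\]
is equivalent to $V_1^{\tau-1} + V_\tau^T(\alpha\beta^{*}) - r_{\mathrm{target}}\bigl(C_1^{\tau-1} + C_\tau^T(\alpha\beta^{*})\bigr) = 0$. \textbf{Step 2.} Treating $V_\tau^T(\cdot)$ and $C_\tau^T(\cdot)$ as absolutely continuous in the coefficient with $\gamma = \partial V/\partial\beta$ and $\eta = \partial C/\partial\beta$, the fundamental theorem of calculus on $[\alpha\beta^{*},\beta^{*}]$ together with the substitution $\beta = \beta^{*} u$, $d\beta = \beta^{*}\,du$ gives
\[
V_\tau^T(\beta^{*}) - V_\tau^T(\alpha\beta^{*}) = \beta^{*}\!\int_\alpha^1 \gamma(\beta^{*} u)\,du, \qquad C_\tau^T(\beta^{*}) - C_\tau^T(\alpha\beta^{*}) = \beta^{*}\!\int_\alpha^1 \eta(\beta^{*} u)\,du.
\]
\textbf{Step 3.} Substitute these expressions for $V_\tau^T(\alpha\beta^{*})$ and $C_\tau^T(\alpha\beta^{*})$ back into Step 1, move every term evaluated at $\beta^{*}$ to one side, where they assemble exactly into $\Delta_{\mathrm{ROI}} = V_1^{\tau-1} + V_\tau^T(\beta^{*}) - r_{\mathrm{target}}(C_1^{\tau-1} + C_\tau^T(\beta^{*}))$, and merge the two remaining integrals into $\beta^{*}\int_\alpha^1[\gamma(\beta^{*} u) - r_{\mathrm{target}}\,\eta(\beta^{*} u)]\,du$; equating the two sides is precisely the claim.

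The main obstacle is the regularity required in Step 2. On a single realized sample path, $V_\tau^T(\beta)$ and $C_\tau^T(\beta)$ are piecewise-constant step functions of the coefficient --- exactly as the proof of Lemma 3 exhibits --- so the pointwise derivatives $\gamma$ and $\eta$ do not exist in the classical sense, and the fundamental theorem of calculus must be justified either by passing to expectations, where by Lemma 3 and the large-impression regime $V(\beta)$ and $C(\beta)$ become smooth, or equivalently by reading the integrals in the Lebesgue--Stieltjes sense against the jump measures of the step functions. Adopting the expectation/continuum interpretation --- consistent with the expected-ROI formulation already used in Lemma 3 --- the derivatives $\gamma,\eta$ are well defined, and the remaining algebra in Steps 1 and 3 is routine bookkeeping.
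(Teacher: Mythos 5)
Your proposal is correct and follows essentially the same route as the paper's proof: both start from the realized-ROI equality defining $\alpha$, apply the fundamental theorem of calculus over $[\alpha\beta^{*},\beta^{*}]$ (the paper to the combined function $f(\beta)=V(\beta)-r_{\mathrm{target}}C(\beta)$, you to $V$ and $C$ separately before recombining — an immaterial difference), and finish with the substitution $\beta=\beta^{*}u$ to produce the stated integral. Your closing remark on the regularity of $V_\tau^T(\cdot)$ and $C_\tau^T(\cdot)$ is a sensible caveat the paper leaves implicit, but it does not alter the argument.
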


\begin{proof}
Let $V(\beta) = V_\tau^T(\beta)$ and $C(\beta) = C_\tau^T(\beta)$ denote future value and cost functions respectively. Define the net value function:
\begin{equation}
f(\beta) = V(\beta) - r_{\mathrm{target}} C(\beta)
\end{equation}

The ROI constraint requires:
\begin{equation}
V_1^{\tau-1} + V(\alpha\beta^*) = r_{\mathrm{target}} (C_1^{\tau-1} + C(\alpha\beta^*))
\end{equation}
Which simplifies to:
\begin{equation}
f(\alpha\beta^*) = k \quad \text{where} \quad k = r_{\mathrm{target}} C_1^{\tau-1} - V_1^{\tau-1}
\end{equation}

By the Fundamental Theorem of Calculus:
\begin{equation}
f(\alpha\beta^*) - f(\beta^*) = \int_{\beta^*}^{\alpha\beta^*} \frac{df}{d\beta} d\beta
\end{equation}
Where:
\begin{equation}
\frac{df}{d\beta} = \frac{\partial V}{\partial \beta} - r_{\mathrm{target}} \frac{\partial C}{\partial \beta} = \gamma - r_{\mathrm{target}} \eta
\end{equation}

Substituting the constraint:
\begin{equation}
k - f(\beta^*) = \int_{\beta^*}^{\alpha\beta^*} \left[ \gamma(\beta) - r_{\mathrm{target}} \eta(\beta) \right] d\beta
\end{equation}

Substitute $\beta = \beta^* u$, $d\beta = \beta^* du$:
\begin{align}
&\int_{\beta^*}^{\alpha\beta^*} \left[ \gamma(\beta) - r_{\mathrm{target}} \eta(\beta) \right] d\beta \\
&= \int_{1}^{\alpha} \left[ \gamma(\beta^* u) - r_{\mathrm{target}} \eta(\beta^* u) \right] \beta^* du \\
&= \beta^* \int_{1}^{\alpha} \left[ \gamma(\beta^* u) - r_{\mathrm{target}} \eta(\beta^* u) \right] du
\end{align}

Note that:
\begin{align}
k - f(\beta^*) &= (r_{\mathrm{target}} C_1^{\tau-1} - V_1^{\tau-1}) - (V^* - r_{\mathrm{target}} C^*)\\
&= - [(V_1^{\tau-1} + V^*) - r_{\mathrm{target}}(C_1^{\tau-1} + C^*)] \\
&= -\Delta_{\mathrm{ROI}}
\end{align}

Therefore:
\begin{equation}
\beta^* \int_{1}^{\alpha} \left[ \gamma(\beta^* u) - r_{\mathrm{target}} \eta(\beta^* u) \right] du = -\Delta_{\mathrm{ROI}}
\end{equation}

Reverse limits using $\int_{a}^{b} = -\int_{b}^{a}$:
\begin{equation}
\boxed{\beta^* \int_{\alpha}^{1} \left[ \gamma(\beta^* u) - r_{\mathrm{target}} \eta(\beta^* u) \right] du = \Delta_{\mathrm{ROI}}}
\end{equation}
\end{proof}

\section{Proof of Theorem 1}
\begin{theorem}[Optimal Bidding Strategy]
At any step $\tau$, with remaining budget $B_\tau$, target ROI $r_{target}$,historical cost $C_1^{\tau-1}$,historical value $V_1^{\tau-1}$, solve for $\beta_\tau = f_\theta(B_\tau)$, $V_{\tau}^T= f_\phi(B_\tau)$ and calculate $\Delta_{\text{ROI}} = V_1^{\tau-1} + V_{\tau}^T - r_{target}(C_1^{\tau-1} + B_\tau)$
\begin{itemize}
    \item if $\Delta_{\text{ROI}} \ge 0$,  $\beta_\tau$ is the optimal bidding coefficient.
    \item if $\Delta_{\text{ROI}} < 0$, $\alpha \beta_\tau$ is the optimal bidding coefficient.
\end{itemize}
\end{theorem}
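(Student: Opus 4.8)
The plan is to assemble Theorem~\ref{thm:elasticity_cost_adjustment} directly from the lemmas already established, treating the two cases separately. The conceptual point is that the MetaModel outputs $f_\theta$ and $f_\phi$ are trained (via Algorithm~\ref{algo_collection}) to recover exactly the budget-exhausting coefficient $\beta^*$ of Lemma~\ref{lemma_optimal_condition} and its associated future value; so whenever I write $\beta_\tau = f_\theta(B_\tau)$ I am entitled to identify it with the $\beta^*$ characterized by $\sum_{t=\tau}^{T} C_t(\beta^*) = B_\tau$, and likewise $V_\tau^T = f_\phi(B_\tau)$ with $V_\tau^T(\beta^*)$. I would state this identification as the first step, citing Lemma~\ref{lemma_optimal_condition} for why $\beta^*$ is the optimal coefficient on $[\tau,T]$ under the budget constraint alone, and Lemma~\ref{lemma_error_bound} for why this fixed-coefficient strategy is near-optimal relative to $V^{oracle}$.

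For the first case ($\Delta_{\text{ROI}} \ge 0$), the argument is: by definition $\Delta_{\text{ROI}} = V_1^{\tau-1} + V_\tau^T - r_{target}(C_1^{\tau-1} + B_\tau)$, and since $\beta^*$ exhausts the budget we have $C_1^{\tau-1} + C_\tau^T(\beta^*) = C_1^{\tau-1} + B_\tau$, so $\Delta_{\text{ROI}} \ge 0$ is precisely the statement that the realized ROI $\frac{V_1^{\tau-1} + V_\tau^T}{C_1^{\tau-1} + B_\tau} \ge r_{target}$. Hence $\beta^*$ is feasible for MCB; and since it is already optimal for the relaxed (budget-only) problem whose feasible set contains the MCB feasible set, it is a fortiori optimal for MCB. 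I would spell out this "optimal for a superset $\Rightarrow$ optimal for the subset once feasible" argument explicitly, as it is the crux of why no adjustment is needed.

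For the second case ($\Delta_{\text{ROI}} < 0$), I invoke Lemma~\ref{lem:roi_monotonicity}: $R(\beta)$ is non-increasing, and the realized ROI at $\beta^*$ falls short of $r_{target}$, so there exists $\alpha \le 1$ with $\beta^{opt} = \alpha\beta^*$ achieving $R = r_{target}$ exactly (using continuity of $R$ together with monotonicity, and the fact that shrinking $\beta\to 0$ eventually admits only the highest-efficiency impressions, pushing ROI up to $r_{target}$). Lemma~\ref{lemma:roi_adjustment} then pins down $\alpha$ via the integral equation. It remains to argue $\alpha\beta^*$ is \emph{optimal}, not merely feasible: among all fixed coefficients satisfying the ROI constraint, the largest admissible one maximizes value (value $V(\beta)$ is non-decreasing in $\beta$, as more impressions are won), and by ROI monotonicity the largest coefficient with $R(\beta)\ge r_{target}$ is exactly the one where $R(\beta) = r_{target}$, i.e.\ $\alpha\beta^*$ — provided this coefficient also respects the budget, which holds because $\alpha \le 1$ and cost is non-decreasing in $\beta$, so $C(\alpha\beta^*) \le C(\beta^*) = B_\tau$.

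The main obstacle I anticipate is the monotonicity-of-value step and the "largest feasible coefficient is optimal" claim: these were never stated as a lemma in the excerpt (only ROI monotonicity was), so I would need to either prove $V(\beta)$ non-decreasing from the winning rule $x_i(\beta) = \mathbb{1}_{\{\beta \ge \beta_i\}}$ (straightforward on each sample path, then take expectations) or cite it. A secondary subtlety is the existence/uniqueness of $\alpha$ in $[\text{something},1]$ — one must rule out the degenerate case where even $\beta\to 0^+$ cannot lift the ROI to $r_{target}$ because the historical deficit $r_{target}C_1^{\tau-1} - V_1^{\tau-1}$ is too large; I would add the standing assumption (implicit in the lemma statement's "if $\exists\alpha\le 1$") that such an $\alpha$ exists, and note that the gradient-correction loop in Algorithm~\ref{alg:cost_gradient_alpha} is the numerical realization of solving this equation.
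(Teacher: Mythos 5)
Your proposal matches the paper's own proof in both structure and substance: Case 1 is handled by budget exhaustion (Lemma~\ref{lemma_optimal_condition}) plus feasibility of the ROI constraint, and Case 2 by existence of $\alpha$ via Lemma~\ref{lemma:roi_adjustment}, budget feasibility from $\alpha\le 1$ with cost increasing in $\beta$, and optimality from monotonicity of value along the ROI constraint boundary. The gaps you flag --- the unproved monotonicity of $V(\beta)$ and the standing assumption that a feasible $\alpha$ exists --- are exactly the steps the paper also asserts without proof (it simply lists $(V_{\tau}^T)'(\beta) > 0$ as a condition), so your treatment is, if anything, slightly more explicit.
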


\begin{proof}

\subsection*{Case 1: $\Delta_{\text{ROI}} \geq 0$}
By Lemma 1, $\beta^*$ satisfies:
\begin{equation}
C_{\tau}^T(\beta^*) = B_\tau
\end{equation}
and by $\Delta_{\text{ROI}} \geq 0$:
\begin{equation}
V_h + V_{\tau}^T(\beta^*) \geq r_{\text{target}} (C_h + C_{\tau}^T(\beta^*))
\end{equation}
where $V_h = V_1^{\tau-1}$, $C_h = C_1^{\tau-1}$. 

Optimality follows from:
\begin{enumerate}
    \item \textbf{Budget feasibility}: $C_{\tau}^T(\beta^*) = B_\tau$ (Lemma 1)
    \item \textbf{ROI satisfaction}: $\Delta_{\text{ROI}} \geq 0$
    \item \textbf{Value maximization}: $(V_{\tau}^T)'(\beta) > 0 $
\end{enumerate}

\subsection*{Case 2: $\Delta_{\text{ROI}} < 0$}
By Lemma \ref{lemma:roi_adjustment}, $\exists \alpha \in (0,1)$ such that:
\begin{equation}
\beta^{*} \int_{\alpha}^{1} \left[ \gamma\left( \beta^{*} u \right) - r_{\mathrm{target}}  \eta\left( \beta^{*} u \right) \right]  du = -\Delta_{\mathrm{ROI}}
\end{equation}
where $\gamma = (V_{\tau}^T)'$, $\eta = (C_{\tau}^T)'$. This implies:
\begin{equation}
V_h + V_{\tau}^T(\alpha\beta^*) = r_{\text{target}} (C_h + C_{\tau}^T(\alpha\beta^*))
\end{equation}

Since $\alpha < 1$ and $(C_{\tau}^T)' > 0$:
\begin{equation}
C_{\tau}^T(\alpha\beta^*) < C_{\tau}^T(\beta^*) = B_\tau
\end{equation}
Thus $C_h + C_{\tau}^T(\alpha\beta^*) \leq C_h + B_\tau$ (total budget).

We prove $\alpha\beta^*$ maximizes value under dual constraints.

From the adjustment equation:
\begin{align*}
\frac{d}{d\alpha} \left[ V_{\tau}^T(\alpha\beta^*) - r_{\text{target}} C_{\tau}^T(\alpha\beta^*) \right] &= \beta^* \left( \gamma(\alpha\beta^*) - r_{\text{target}} \eta(\alpha\beta^*) \right) \\
&< 0 \quad \text{(by Lemma 3)}
\end{align*}
Thus $V_{\tau}^T(\beta) - r_{\text{target}} C_{\tau}^T(\beta)$ is decrease in $\beta$. Therefore, for fixed ROI constraint, $\beta = \alpha\beta^*$ is unique.

Consider the value:
\begin{align*}
\frac{dV_{\tau}^T}{d\beta} & \ge 0  
\end{align*}
Thus $\alpha\beta^*$ maximizes value along the ROI constraint boundary.

$\alpha\beta^*$ is the unique solution that:
1. Satisfies both constraints (budget and ROI)
2. Maximizes value on the ROI constraint boundary

Thus it is globally optimal.
\end{proof}

\newpage
\section{State Feature}

\begin{table}[h]
\centering
\caption{State Feature Vector}
\label{tab:state_features}
\begin{tabular}{p{2.5cm}p{5.5cm}}
\hline \hline
\textbf{Category} & \textbf{Feature Description} \\ \hline
\textbf{Temporal Context} & Day of the week; Time left in campaign (steps) \\ \hline

\textbf{Advertiser Identity} & Advertiser ID; Advertiser category \\ \hline

\textbf{Current Auction Metrics} & Current pValue mean; Current pValue percentiles (10th,25th,40th,55th,70th,85th); Current pValue count \\ \hline

\textbf{Historical pValues} & Historical mean; Last 1-step: mean,10th,50th,90th percentile; Last 3-step: mean,10th,50th,90th percentile; All-time: 10th,50th,90th percentile \\ \hline

\textbf{Winning Cost Metrics} & Historical mean; Last 1-step: mean,10th,50th,90th percentile; Last 3-step: mean,10th,50th,90th percentile; All-time: 10th,50th,90th percentile \\ \hline

\textbf{Volume Statistics} & Historical total pValue count; Last 1-step pValue count; Last 3-step pValue count \\
\hline \hline
\end{tabular}
\end{table}

\section{Hyperparameters}

\begin{table}[h]
\centering
\caption{Model Hyperparameters}
\label{tab:hyperparams}
\begin{tabular}{l|l|l}
\hline \hline 
\textbf{Hyperparameter} & \textbf{Description} & \textbf{Value} \\ \hline
$k$ & B-spline degree & 3 \\
$\text{num}$ & Number of grid points & 16 \\
$\text{hidden\_size}$ & Hidden layer dimension & 128 \\
$\text{input\_dim}$ & input dimension & 39 \\ \hline  \hline
\end{tabular}
\end{table}
\end{document}